\documentclass{article}

    \PassOptionsToPackage{numbers, compress}{natbib}
 \usepackage[preprint]{neurips_2026}

\usepackage[utf8]{inputenc} 
\usepackage[T1]{fontenc}    
\usepackage{hyperref}       
\usepackage{url}            
\usepackage{booktabs}       
\usepackage{amsfonts}       
\usepackage{nicefrac}       
\usepackage{microtype}      
\usepackage{xcolor}         
\usepackage{subfigure}
\usepackage{amsmath}
\usepackage{amssymb}
\usepackage{mathtools}
\usepackage{amsthm}
\usepackage{graphicx}
\usepackage{afterpage}
\usepackage{subcaption}   
\usepackage{multirow}
\usepackage{wrapfig}
\usepackage{arydshln}   
\usepackage{makecell}   
\usepackage[table,dvipsnames]{xcolor}

\usepackage{tcolorbox}
\tcbuselibrary{listings}
\usepackage{thmtools}
\usepackage{tikz}
\usepackage{enumitem}
\usetikzlibrary{shapes,arrows,calc,positioning}
\usetikzlibrary{graphs}
\usepackage{bm}
\definecolor{darkgreen}{rgb}{0, 0.5, 0}

\usepackage[capitalize,noabbrev]{cleveref}

\DeclareMathOperator*{\argmax}{arg\,max}

\theoremstyle{plain}

\theoremstyle{definition}

\theoremstyle{remark}

\newenvironment{proofsketch}{%
  \begin{proof}[Proof sketch]
}{%
  \end{proof}
}
\title{Conformal Agent Error Attribution}

\author{%
  Naihe Feng\thanks{Equal Contribution} \\
  Dalhousie University\\
  Halifax, NS, Canada \\
  \texttt{nfeng@dal.ca} \\
\And
    Yi Sui$^*$ \\
    Layer 6 AI\\
    Toronto, ON, Canada\\
    \texttt{amy@layer6.ai}\\
    \And
    Shiyi Hou \\
    Layer 6 AI\\
    Toronto, ON, Canada\\
    \texttt{gloria@layer6.ai}\\
  \AND
  Ga Wu\\
  Dalhousie University\\
  Halifax, NS, Canada \\
  \texttt{ga.wu@dal.ca} \\
  \And
  Jesse C. Cresswell \\
  Layer 6 AI \\
  Toronto, ON, Canada\\
  \texttt{jesse@layer6.ai}
}

\begin{document}

\maketitle

\begin{abstract}

When multi-agent systems (MAS) fail, identifying where the decisive error occurred is the first step for automated recovery to an earlier state. Error attribution remains a fundamental challenge due to the long interaction traces that large language model-based MAS generate. This paper presents a framework for error attribution based on conformal prediction (CP) which provides finite-sample, distribution-free coverage guarantees. We introduce new algorithms for filtration-based CP designed for sequential data such as agent trajectories. Unlike existing CP algorithms, our approach predicts sets that are contiguous sequences to enable efficient recovery and debugging. We verify our theoretical guarantees on a variety of agents and datasets, show that errors can be precisely isolated, then use prediction sets to rollback MAS to correct their own errors. Our overall approach is model-agnostic, and offers a principled uncertainty layer for MAS error attribution. We release code at \href{https://github.com/layer6ai-labs/conformal-agent-error-attribution}{\texttt{github.com/layer6ai-labs/conformal-agent-error-attribution}}.
\end{abstract}

\section{Introduction}
\label{sec:intro}

\begin{wrapfigure}[7
]{r}{0.5\textwidth}
\vspace{-12pt}
    \centering
    \includegraphics[width=0.48\textwidth, trim={25 77 31 6
    }, clip]{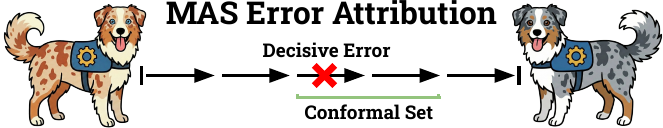}
    \vspace{-5pt}
    \caption{\footnotesize Conformal agent error attribution isolates the decisive error in a failed MAS trajectory within a conformal prediction set, providing statistical guarantees of coverage.}
    \label{fig:MAS_error_attribution}
\end{wrapfigure}
\normalsize

Advances in large language models (LLMs) have driven the widespread adoption of multi-agent systems (MAS) for complex tasks requiring decomposition, coordination, and tool use~\cite{guo2024largelanguagemodelbased}, with strong empirical performance in domains such as software engineering~\cite{hong2024metagpt, huang2024agentcoder}, scientific discovery~\cite{Ghafarollahi2024SciAgents}, and financial decision making~\cite{yu2024fincon}. However, the increased system complexity and rich interactions in MAS make them prone to errors from incorrect intermediate decisions, miscoordination among agents, and long-horizon dependencies~\cite{pan2025why, leung2026classifying}. While detecting \emph{overall} task failure is often straightforward, understanding \textit{why and where} a failure originated remains challenging yet critical for debugging, and self-correction. Identifying the decision step that constitutes the decisive error point has emerged as a central challenge for improving MAS.

Most existing MAS error attribution approaches, including naive LLM-as-a-judge methods~\cite{zhang2025agent}, structured reasoning pipelines~\cite{hengst2025hierarchical, zhu2025raffles, yu2025correct}, and fine-tuned attribution models~\cite{kong2025aegis}, ultimately produce a point prediction, committing to a single responsible step. In practice, point predictions provide limited actionable insight for practitioners as they offer no principled form of uncertainty quantification to assess reliability, undermining the trustworthiness of error attribution systems~\cite{ross2026textual}. Conformal prediction (CP) offers a promising direction for addressing this limitation through the generation of \emph{prediction sets}. CP enables reliable decision-making under uncertainty by providing statistical guarantees across a range of applications~\cite{angelopoulos2022gentle, cresswell2024}.

Motivated by these developments, we propose an uncertainty-aware framework for error attribution in MAS based on CP, which provides finite-sample, distribution-free coverage guarantees. Rather than predicting a single step, our methods identify a localized region of the execution trace that is guaranteed to contain the decisive error at a user-specified confidence level (see \Cref{fig:MAS_error_attribution}). We introduce novel methods for filtration-based CP which are adapted to sequential data structures, like agent trajectories. Unlike existing CP approaches that produce arbitrary sets, our methods produce \emph{contiguous} sets, aligning with the inherent ordinal structure of sequential data. Finally, we use conformal sets to rollback the MAS to before the decisive error, allowing the agent to restart and fix its own mistakes. Our approach is model-agnostic and can wrap existing black-box attribution scores, while providing a principled uncertainty layer for error attribution in real-world MAS.

\section{Background \& Related Work}
\label{sec:background}

\subsection{Post-hoc Multi-agent System Error Attribution}\label{subsec:mas-error-attribution}

Recent work on error attribution in MAS has primarily studied post-hoc localization of errors using execution traces. Early approaches used \textit{naive LLM-as-a-judge}, where a single LLM directly predicts the responsible step from a failed trace~\cite{zhang2025agent}. Subsequent work improved attribution quality through more sophisticated LLM pipelines, including \emph{context engineering} as well as multi-LLM frameworks. For example, ECHO~\cite{banerjee2025hierarchical} improves attribution by organizing long traces into hierarchical contexts and aggregating evaluations via consensus, while RAFFLES~\cite{zhu2025raffles} employs a multi-turn, multi-LLM architecture that iteratively proposes and critiques candidate error steps. Additionally, CORRECT~\cite{yu2025correct} incorporates retrieval to localize the error step based on similar events. A complementary line of work \textit{fine-tunes specialized LLM judges} for error attribution. In particular, AEGIS~\cite{kong2025aegis} constructs large-scale labeled failure traces via controlled error injection for fine-tuning LLMs on the task.

In our experiments we compare the efficacy of these three main classes of evaluators: naive, context-engineered, and fine-tuned LLM judges. We note that all of the above research assumes a single decisive error, whereas in practical MAS applications small errors may accumulate into large ones. Due to the lack of labeled datasets with more nuanced error definitions, we follow current work and focus on the decisive error setting.
\subsection{Conformal Prediction}\label{subsec:conformal}
For a classification problem where inputs $x\in \mathcal{X}$ and ground-truth values $y^*\in \mathcal{Y}=[\ell]:=(1, \dots, \ell)$ are drawn jointly from a distribution $(x, y^*)\sim \mathbb{P}$, CP first calibrates a threshold $\hat q$ from a set of held-out data. Then for a new datapoint $x_{n+1}$, CP outputs a set of classes $C(x_{n+1}; \hat q)\subseteq \mathcal{Y}$ which contains $y^*$ with high probability $\mathbb{P}[y_{n+1}^* \in C(x_{n+1};\hat q)] \geq 1 - \alpha$. This \emph{coverage} guarantee is distribution-free and valid in finite samples, while also allowing the user to set their own error tolerance $\alpha$ \citep{vovk2005algorithmic, shafer2008tutorial}.

To perform CP, one first defines a \emph{conformal score} function $S:\mathcal{X} \times \mathcal{Y}\to \mathbb{R}^+$, which should take smaller values when $y=y^*$ is the correct label for $x$. In practice, $S(x, y)$ often leverages the predictions of a pre-trained classification model $f : \mathcal{X}\to \mathcal{Y}$. Using a set of $n$ calibration datapoints, CP computes the scores $\{S_i\}_{i=1}^{n} = \{S(x_i, y^*_i)\}_{i=1}^{n}$, and finds the $\tfrac{\lceil{(n+1)(1-\alpha)}\rceil}{n}$ quantile which is set as the threshold $\hat q$. Then prediction sets can be generated by including all classes for which the score is greater than $\hat q$, $C(x_{n+1}; \hat q)  = \{y\in \mathcal{Y} \mid S(x_{n+1}, y) \leq \hat q \}$. When $x_{n+1}$ is exchangeable with the calibration data, the coverage guarantee is valid. Exchangeability is a mild assumption that automatically holds when data is i.i.d., and hence is reasonable for many machine learning contexts, including the agent error attribution task as we show in our experiments. At equal coverage levels $1-\alpha$, smaller prediction sets are considered more useful both for uncertainty quantification over the predictions of $f_\theta$ \cite{romano2020aps, angelopoulos2021raps, huang2024saps}, and in downstream tasks \cite{cresswell2024, cresswell2025conformal}.
\subsection{Conformal Prediction for Sequential Data}\label{subsec:sequential}
Another common setting is where data is sequential, $x=(c_1, \dots, c_{\ell})$, with variable length $\ell$, where the ground truth $y^*\subset x$ is a subset of elements. Following \citet{kuwahara2025document}, the principles of CP can be used to calibrate a threshold $\hat q$, and predict a subset $C(x_{n+1}; \hat q)\subseteq x_{n+1}$ which retains the ground truth elements with high probability, $\mathbb{P}[y_{{n+1}}^* \subseteq C(x_{n+1}; \hat q)] \geq 1 - \alpha$. In some settings, $y^*$ will consist of multiple elements, and the predicted set $C(x_{n+1}; \hat q)$ need not be contiguous. For agent error attribution we take $x$ to be the agent's trajectory, and $y^*$ to be the single decisive error---one of the $c_i$. As we will discuss, for downstream applications including automated rollbacks of the agent's state it is desirable to predict sets of \emph{consecutive} elements, rather than arbitrary subsets.  Hence, we develop novel CP algorithms that satisfy a coverage guarantee using contiguous prediction sets,
\begin{equation}\label{eq:coverage-guarantee-contiguous}
  \mathbb{P}[y_{n+1}^*\in C(x_{n+1};\hat q)] \geq 1 - \alpha,  \quad C(x_{n+1}; \hat q) = (c_j, ..., c_k).
\end{equation}

The only existing CP algorithms that produce contiguous sets were designed for hierarchical classification \cite{mortier2025conformal}. We describe one such algorithm in \Cref{sec:crsvp} and adapt it for sequential data.

\section{Conformal Agent Error Attribution}
\label{sec:method}

For the remainder of this work we take $x=(c_1, \dots, c_{\ell})$ to be an agent trajectory which has failed to complete the desired task. Each step $c_j$ can contain any available information such as the environment's state, action taken, and observed response. One of the steps $y^*\in x$ is labeled as the decisive error—the earliest error that the MAS cannot recover from. The aim is to produce a prediction set $C(x_{n+1})\subseteq x_{n+1}$ that gives valid coverage, where smaller sets are preferred.

Applying CP to agent error attribution requires two components: an algorithm which takes a calibration dataset and generates a prediction set for $x_{n+1}$ with valid coverage; and a scoring function $g(C(x))$ acting on sets of steps, which quantifies the likelihood that $y^*\in C(x)$. We design these two components separately so that they are interchangeable, and discuss the pros and cons of each option.
\subsection{Conformal Algorithms for Agent Error Attribution}
\subsubsection{Vanilla Conformal Prediction}
The simplest approach is to ignore the sequential nature of $x$ and treat all steps as unordered classes in an $\ell$-way classification task. We will write $S_\text{VCP}=1-g$ for the conformal score function, and $C_\text{VCP}(x_{n+1}; \hat q) = \{c_i\in x_{n+1} \mid S_\text{VCP}(x_{n+1}, c_i) \leq \hat q \}$ for prediction sets generated by Vanilla CP (VCP). Prediction requires iterating over every step in the trajectory using $\ell$ evaluations of $g$, and does not produce contiguous sets.
\subsubsection{Leaf-to-Root Tree Traversal}\label{sec:crsvp}
\begin{wrapfigure}[10]{r}{0.48\textwidth}
  \centering 
  \vspace{-8pt}
  \hspace{-12pt}
  \begin{tikzpicture}[sibling distance=2.2em,
      every node/.style = {align=center},
      line/.style={draw, -latex'},
      edge from parent/.style={draw,-latex',font=\normalsize},
      level 1/.style={sibling distance=34mm, level distance=8mm, font=\normalsize}, 
      level 2/.style={sibling distance=16mm, level distance=8mm,text width=1.8cm,font=\Large}] 
      \node[font=\normalsize] {$v_1=(c_1,c_2,c_3,c_4)$}
        child { node (ch1){$v_2=(c_1,c_2)$} 
            child { node {$\substack{v_4=(c_1)}$} }
            child { node {$\substack{v_5=(c_2)}$} }
            }
        child { node (ch2){$v_3=(c_3,c_4)$} 
            child { node {$\substack{v_{6}=(c_3)}$} }
            child { node {$\substack{v_{7}=(c_4)}$} } };
\end{tikzpicture}
  \caption{\footnotesize An example binary tree $\mathcal{T}$ representing an agent trajectory $x$ consisting of four steps $c_1, ..., c_4$. Contiguous prediction sets $C(x_{n+1})$ will consist of a single node $v_i$.}
  \label{fig:tree-example}
\end{wrapfigure}

To produce contiguous sets, we can adapt algorithms for hierarchical classification by mapping agent trajectories $x$ onto a binary tree $\mathcal{T}$ as depicted in \Cref{fig:tree-example}, with root node $v_1 = [\ell]$, and leaf nodes $v_\ell, \dots, v_{2\ell-1}$ as individual steps $c_1, \dots, c_\ell$. CP is conducted by traversing the tree from leaf to root following the CRSVP algorithm \citep{mortier2025conformal} described in full detail in \Cref{app:b}. For each test datapoint, CRSVP outputs one node of the tree as the prediction set which is always a contiguous set, and guarantees the lower bound on coverage in \Cref{eq:coverage-guarantee-contiguous}.

CRSVP lacks an upper bound on coverage, uses $\ell$ evaluations of $g$ for prediction, and produces inflexible sets following the tree's splits. For example, in \Cref{fig:tree-example} the middle steps $c_2$ and $c_3$ can only be predicted together in the trivial case where all steps are predicted ($v_1$). VCP and CRSVP serve as baselines in our experiments. The following novel algorithms improve on their limitations.
\newpage
\subsubsection{Left (Right) Filtration}\label{sec:LF}

\begin{wrapfigure}[9]{r}{0.48\textwidth}
  \centering 
    \includegraphics[width=0.95\linewidth, trim={0 0 35 0}, clip]{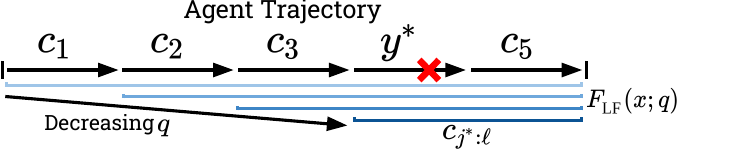}
    \caption{\footnotesize Left filtering with $F_\text{LF}(x;q)$ progressively removes steps from the left as $q$ decreases. The smallest $q$ which retains the decisive error $y^*$  is used as the conformal score $S_\text{LF}(x, y^*)$.}
    \label{fig:LF}
\end{wrapfigure}
Viewing a trajectory $x=(c_1, \dots, c_{\ell})$ as a sequence, Left Filtration (LF) progressively removes steps from $x$ starting on the left with $c_{1}$ until the remaining subsequence scores below a calibrated threshold, returning a \emph{suffix} of the full sequence.

We assume access to a scoring function $g_{\text{LF}}$ which scores subintervals $c_{j: k} := (c_j, \dots, c_k) \subseteq x$ for their likelihood to contain $y^*$, imposing the boundary conditions $g_{\text{LF}}(\varnothing) = 0$, since the empty interval cannot contain $y^*$, and $g_{\text{LF}}(x) = \infty$ since $y^*\in x$. We will use $j^*$ as the index where $y^*$ appears, so $c_{j^*} = y^*$. Finally, it is desirable, but not mandatory, to have $g_{\text{LF}}$ obey a monotonicity condition:
\begin{equation}\label{eq:g_LF_nesting}
    c_{b: c}\subseteq c_{a: d} \implies g_{\text{LF}}(c_{b: c}) \leq g_{\text{LF}}(c_{a: d}).
\end{equation}
Next, we define a filtering function which returns the longest suffix that has a low enough score $g_{\text{LF}}$. Formally, we write the set of suffixes as 
$\mathcal{I}_{\text{LF}}:= \{c_{j:  \ell}\}_{j=1}^{\ell+1}$, with the conventions for subintervals that $c_{j: j} = c_j$, and $c_{j: k} = \varnothing$ when $j>k$. With these conventions, the left-filtering function is
\begin{equation}\label{eq:filtering_left}
    F_{\text{LF}}(x; q) := \argmax_{c_{j: \ell} \in \mathcal{I}_{\text{LF}}} \big(\vert c_{j: \ell}\vert \mid g_{\text{LF}}(c_{j: \ell}) \leq q\big),
\end{equation}
where $q\in \mathbb{R}^+$, and we note that $F_{\text{LF}}(x; \infty) = x$. Since the suffixes in $\mathcal{I}_{\text{LF}}$ are nested, $F_{\text{LF}}(x; q)$ also satisfies a nesting property demonstrating that more steps are filtered out as $q$ decreases. (All formal proofs are given in \Cref{app:lf-proof}.)
\begin{restatable}{lemma}{LFNesting}
\label{thm:LFNesting}
For any $x$ and thresholds $0\leq q_1 \leq q_2$, $F_{\text{LF}}(x; q_1) \subseteq F_{\text{LF}}(x; q_2)$.
\end{restatable}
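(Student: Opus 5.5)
The plan is to reduce the statement to a comparison of the feasible sets of the $\argmax$ in \Cref{eq:filtering_left}. For a threshold $q$, write
\[
A(q) := \{\, c_{j:\ell}\in\mathcal{I}_{\text{LF}} \mid g_{\text{LF}}(c_{j:\ell})\leq q \,\}
\]
for the feasible suffixes. Before comparing, I need $F_{\text{LF}}(x;q)$ to be well defined for every $q\geq 0$, so I will check two things. First, $A(q)$ is nonempty, because the boundary condition $g_{\text{LF}}(\varnothing)=0\leq q$ puts $c_{\ell+1:\ell}=\varnothing$ in $A(q)$. Second, the maximizer is unique, because distinct suffixes $c_{j:\ell}$ have distinct lengths $\ell-j+1$. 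So $F_{\text{LF}}(x;q)$ is simply the feasible suffix with the smallest starting index, $c_{j(q):\ell}$, where $j(q):=\min\{j \mid g_{\text{LF}}(c_{j:\ell})\leq q\}$.

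The core step is monotonicity of the feasible sets. If $q_1\leq q_2$ and $g_{\text{LF}}(c_{j:\ell})\leq q_1$, then $g_{\text{LF}}(c_{j:\ell})\leq q_2$, so $A(q_1)\subseteq A(q_2)$. It follows that the minimum over the larger index set is no larger, giving $j(q_2)\leq j(q_1)$. Equivalently, the maximum length over $A(q_2)$ is at least the maximum length over $A(q_1)$. This argument never uses the monotonicity condition \Cref{eq:g_LF_nesting}, which fits the paper's remark that the condition is desirable but not mandatory.

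Finally, I convert the index inequality into set containment using the nested structure of suffixes: whenever $j\leq j'$, every step $c_i$ with $j'\leq i\leq \ell$ also satisfies $j\leq i$. Hence $c_{j':\ell}\subseteq c_{j:\ell}$, and applying this with $j=j(q_2)$ and $j'=j(q_1)$ gives $F_{\text{LF}}(x;q_1)\subseteq F_{\text{LF}}(x;q_2)$. I also need the case where $c_{j':\ell}$ is empty, which holds trivially.

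I do not expect a real obstacle. The only points needing care are the well-definedness of the $\argmax$, which rests on nonemptiness via $g_{\text{LF}}(\varnothing)=0$ and uniqueness via distinct lengths, and the empty-suffix convention $c_{\ell+1:\ell}=\varnothing$.
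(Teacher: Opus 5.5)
Your proposal is correct and follows essentially the same route as the paper, which also shows that the sets of valid suffixes satisfy $\mathcal{V}_{q_1}\subseteq\mathcal{V}_{q_2}$, deduces $|F_{\text{LF}}(x;q_1)|\leq|F_{\text{LF}}(x;q_2)|$, and then uses the nesting of suffixes to turn the length comparison into containment. Your explicit check that the $\argmax$ is well defined (nonempty because $g_{\text{LF}}(\varnothing)=0\leq q$, and unique because distinct suffixes have distinct lengths) is a small addition that the paper leaves implicit.
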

\vspace{-10pt}
\begin{proofsketch} Suffixes themselves are nested. When $q_1$ increases to $q_2$, the set of valid suffixes with $g_{\text{LF}}(c_{j:\ell}) \leq q$ can only grow, and $F_{\text{LF}}$ returns the largest valid suffix.
\end{proofsketch}
For a trajectory $x$, the conformal score is effectively the smallest threshold $q$ where $y^*$ is not filtered out (see \Cref{fig:LF}). More formally we define
\begin{equation}\label{eq:left-filter-score}
    S_{\text{LF}}(x, y^*) := \inf\big(q\in \mathbb{R^+} \mid y^*\in F_{\text{LF}}(x; q)\big).
\end{equation}
Although this definition involves two optimizations, $S_{\text{LF}}$ is designed so that its computation greatly simplifies in practice. When $g_\text{LF}$ obeys monotonicity (\Cref{eq:g_LF_nesting}), $S_{\text{LF}}$ is simply the value of $g_{\text{LF}}$ on the suffix that starts at $y^*$.
\begin{restatable}{proposition}{LFComputation}
\label{thm:LFComputation}
When $g_\text{LF}$ obeys monotonicity such that $c_{b: c}\subseteq c_{a: d} \implies g_{\text{LF}}(c_{b: c}) \leq g_{\text{LF}}(c_{a: d})$, then $S_{\text{LF}}(x, y^*) = g_\text{LF}(c_{j^*:\ell})$.
\end{restatable}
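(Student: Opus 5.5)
We prove the equality by establishing two inequalities. Write $g^* := g_\text{LF}(c_{j^*:\ell})$. The first step is to characterize the event $y^*\in F_\text{LF}(x;q)$. Since $F_\text{LF}(x;q)$ is a suffix $c_{j:\ell}$, we have $y^*\in F_\text{LF}(x;q)$ if and only if the chosen starting index satisfies $j\le j^*$. Under monotonicity, $g_\text{LF}(c_{j:\ell})$ is nonincreasing in $j$, because $c_{j':\ell}\subseteq c_{j:\ell}$ for $j\le j'$. Hence the set of valid suffixes $\{c_{j:\ell} : g_\text{LF}(c_{j:\ell})\le q\}$ is an upper set of indices: if $c_{j:\ell}$ is valid then so is every shorter suffix. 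This set is nonempty, since $g_\text{LF}(\varnothing)=0\le q$ gives index $\ell+1$. The argmax over length therefore picks the smallest valid index $j_{\min}(q)$.

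For the upper bound, take $q=g^*$. Then $c_{j^*:\ell}$ is valid, so $j_{\min}(g^*)\le j^*$, and thus $y^*\in F_\text{LF}(x;g^*)$. This shows $g^*$ lies in the set over which the infimum is taken, so $S_\text{LF}(x,y^*)\le g^*$. We must also check that $g^*\in\mathbb{R}^+$. If $j^*=1$, then $g^*=g_\text{LF}(x)=\infty$. In that case $y^*\in F_\text{LF}(x;q)$ requires the full suffix $x$ to be valid, which no finite $q$ achieves. The infimum over an empty set is then $\infty$ by convention, which matches $g^*$, so we treat this case separately.

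For the lower bound, take any $q$ with $y^*\in F_\text{LF}(x;q)$. Then the selected suffix $c_{j:\ell}$ has $j\le j^*$ and satisfies $g_\text{LF}(c_{j:\ell})\le q$. Monotonicity, together with $c_{j^*:\ell}\subseteq c_{j:\ell}$, gives $g^*\le g_\text{LF}(c_{j:\ell})\le q$. Taking the infimum over all such $q$ yields $S_\text{LF}(x,y^*)\ge g^*$.

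The argument is essentially a routine verification, and the main care point is the handling of edge cases. One is the argmax tie-breaking: it is harmless here, since suffix lengths are distinct, so the maximizer is unique. The other is the $j^*=1$ infinite case together with the convention $\inf\varnothing=\infty$. Lemma \ref{thm:LFNesting} is not strictly needed, but it can be cited to confirm that the feasible set of $q$ is an up-ray, which makes the infimum a minimum attained at $g^*$.
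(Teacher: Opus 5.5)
Your proof is correct and follows essentially the same route as the paper's. You show that $g_\text{LF}(c_{j^*:\ell})$ is a feasible threshold, then use monotonicity to show that no smaller threshold is feasible; you argue this directly, where the paper argues the contrapositive over $q'<q^*$. Your explicit treatment of the $j^*=1$ case, where $g_\text{LF}(x)=\infty$ lies outside $\mathbb{R}^+$ and the convention $\inf\varnothing=\infty$ is needed, is a point of care that the paper's proof passes over.
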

\begin{proofsketch}
$S_{\text{LF}}(x, y^*)$ optimizes over $q$ where $F_{\text{LF}}(x; q)$ returns a suffix at least as large as $c_{j^*:\ell}$. Due to monotonicity, suffixes larger than $c_{j^*:\ell}$ cannot have scores less than $g_\text{LF}(c_{j^*:\ell})$, so $g_\text{LF}(c_{j^*:\ell})$ is the smallest valid threshold.
\end{proofsketch}
The LF conformal algorithm computes $S_{\text{LF}}$ for each calibration datapoint, and sets the conformal threshold $\hat q$ as the $\frac{\lceil(n+1)(1-\alpha)\rceil}{n}$ quantile. For a test datapoint we predict $C_{\text{LF}}(x_{n+1}; \hat q) = F_{\text{LF}}(x_{n+1}; \hat q)$, which is the longest suffix $c_{ j:\ell}$ such that $g_{\text{LF}}(c_{j:\ell}) \leq \hat q$. A shorter suffix is preferable since it better isolates the decisive error. This algorithm satisfies \Cref{eq:coverage-guarantee-contiguous} for any scoring function $g_{\text{LF}}$ as defined above. Before proving this coverage guarantee, we present a lemma to assist:
\begin{restatable}{lemma}{LFEquivalence}
\label{thm:LFEquivalence}
For a fixed $\hat q$, we have $S_{\text{LF}}(x, y^*) \leq \hat{q} \iff y^* \in F_{\text{LF}}(x; \hat{q})$.
\end{restatable}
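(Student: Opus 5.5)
The reverse direction is immediate, and the forward direction needs one extra fact: that the set of admissible thresholds is closed from the left, so the infimum is attained. Throughout, let $Q(x) := \{q \in \mathbb{R}^+ \mid y^* \in F_{\text{LF}}(x;q)\}$, so that $S_{\text{LF}}(x,y^*) = \inf Q(x)$.

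\emph{Step 1: $Q(x)$ is upward closed.} If $q \in Q(x)$ and $q' \geq q$, then \Cref{thm:LFNesting} gives $y^* \in F_{\text{LF}}(x;q) \subseteq F_{\text{LF}}(x;q')$. Hence $Q(x)$ is an interval of the form $[s,\infty)$ or $(s,\infty)$, where $s = S_{\text{LF}}(x,y^*)$.

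\emph{Step 2: the direction $(\Leftarrow)$.} If $y^* \in F_{\text{LF}}(x;\hat q)$, then $\hat q \in Q(x)$, so $S_{\text{LF}}(x,y^*) \leq \hat q$ by the definition of infimum.

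\emph{Step 3: the direction $(\Rightarrow)$.} This is where I expect the only real work: I must rule out the case $Q(x) = (s,\infty)$, where $S_{\text{LF}} = s \leq \hat q$ but $s \notin Q(x)$.

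\begin{itemize}[leftmargin=*]
\item \emph{Case $\hat q > s$.} Then $\hat q \in Q(x)$ by Step 1, since $Q(x)$ contains every value strictly above its infimum.
\item \emph{Case $\hat q = s$.} Here I use that only finitely many suffixes exist. The suffix $F_{\text{LF}}(x;q)$ changes only at the finitely many values $g_{\text{LF}}(c_{j:\ell})$.
\end{itemize}

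For the second case, observe that $y^* \in F_{\text{LF}}(x;q)$ iff some suffix $c_{j:\ell}$ with $j \le j^*$ satisfies $g_{\text{LF}}(c_{j:\ell}) \le q$. This holds because $F_{\text{LF}}$ returns the longest admissible suffix, and the longest one contains $y^*$ iff some admissible suffix does, since suffixes are nested. Therefore
\[
Q(x) = \Bigl\{ q \;\Bigm|\; q \geq \min_{j \leq j^*} g_{\text{LF}}(c_{j:\ell}) \Bigr\}.
\]
This set is closed, its infimum is that minimum, and it is attained. The minimum is finite for $\ell \geq 1$: if $j^* > 1$, the terms with $j \geq 2$ avoid the boundary value $g_{\text{LF}}(x) = \infty$. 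If instead $j^* = 1$, the minimum is $\infty$ and both sides of the equivalence are false for finite $\hat q$, which I will note separately.

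With this characterization, $S_{\text{LF}}(x,y^*) \le \hat q$ implies $\hat q \in Q(x)$, i.e.\ $y^* \in F_{\text{LF}}(x;\hat q)$. The argument does not use monotonicity of $g_{\text{LF}}$, so the lemma holds for any scoring function, as the coverage claim requires.
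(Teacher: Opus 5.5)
Your proof is correct and takes essentially the same route as the paper. In both, the key step is the characterization $\mathcal{Q} = [q_{\min},\infty)$ with $q_{\min} = \min_{j\le j^*} g_{\text{LF}}(c_{j:\ell})$, which follows from nesting of suffixes and makes the equivalence immediate, so your preliminary case split via \Cref{thm:LFNesting} is actually redundant. Your explicit handling of $j^*=1$, where $q_{\min}=g_{\text{LF}}(x)=\infty$ and both sides are false for finite $\hat q$, covers an edge case the paper's proof glosses over.
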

\begin{proofsketch}
The infimum in $S_{\text{LF}}(x, y^*)$ is always achieved at $q_\text{min}=\min \{g_{\text{LF}}(c_{j:\ell}) \mid j \le j^*\}$, and $y^* \in F_{\text{LF}}(x; q_\text{min})$. Because $F_{\text{LF}}$ is nested in $q$ (\Cref{thm:LFNesting}), increasing $q_\text{min}$ to $\hat q$ maintains $y^* \in F_{\text{LF}}(x; \hat q)$.
\end{proofsketch}
With these facts established, we can prove the coverage guarantee of \Cref{eq:coverage-guarantee-contiguous} using a standard conformal argument.
\begin{restatable}{theorem}{LFTheorem}
\label{thm:LF}
Suppose $\{(x_i, y_i^*)\}_{i=1}^n$ and $(x_{n+1}, y_{n+1}^*)$ are exchangeable. Given a scoring function $g_{\text{LF}}$ acting on subintervals of the $x_i$, define the conformal score $S_{\text{LF}}(x_i, y_i^*)$ as in \Cref{eq:left-filter-score}. Let $\hat q$ be the $\tfrac{\lceil{(n+1)(1-\alpha)}\rceil}{n}$ quantile of conformal scores $\{S_i\}_{i=1}^n = \{S_{\text{LF}}(x_i, y^*_i)\}_{i=1}^n$. Then prediction sets constructed as $C_{\text{LF}}(x_{n+1}; \hat q) = F_{\text{LF}}(x_{n+1}; \hat q)$ satisfy $1 - \alpha \leq\mathbb{P}[y_{n+1}^* \in C_{\text{LF}}(x_{n+1}; \hat q)] < 1 - \alpha + \frac{1}{n+1}$.
\end{restatable}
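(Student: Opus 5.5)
The plan is to reduce coverage of the contiguous set to a statement about the rank of a scalar conformal score, and then run the standard split-conformal quantile argument. By \Cref{thm:LFEquivalence}, applied with the fixed (data-dependent but test-independent) threshold $\hat q$, the event $y_{n+1}^*\in C_{\text{LF}}(x_{n+1};\hat q)=F_{\text{LF}}(x_{n+1};\hat q)$ is exactly the event $S_{n+1}\le \hat q$, where $S_{n+1}:=S_{\text{LF}}(x_{n+1},y_{n+1}^*)$. So it suffices to show
\[
1-\alpha\le \mathbb{P}[S_{n+1}\le \hat q]<1-\alpha+\tfrac{1}{n+1}.
\]
The filtration structure enters only through this equivalence. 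After that step, $F_{\text{LF}}$ plays no further role.

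Next, I will note that $S_{\text{LF}}$ is a fixed deterministic function applied identically to each pair $(x_i,y_i^*)$, since $g_{\text{LF}}$ is given in advance and not fit on calibration data. Hence the scores $S_1,\dots,S_{n+1}$ inherit exchangeability from the data. Let $k=\lceil (n+1)(1-\alpha)\rceil$; then $\hat q$ is the $k$-th smallest of $S_1,\dots,S_n$, with the convention $\hat q=\infty$ if $k>n$. The event $S_{n+1}\le\hat q$ holds if and only if $S_{n+1}$ is among the $k$ smallest of $S_1,\dots,S_{n+1}$, i.e.\ its rank is at most $k$. By exchangeability, the rank of $S_{n+1}$ is uniform on $\{1,\dots,n+1\}$ when there are no ties, and stochastically no larger than uniform in general. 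This gives
\[
\mathbb{P}[S_{n+1}\le\hat q]\ge \tfrac{k}{n+1}\ge 1-\alpha.
\]

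For the upper bound, I would assume, as is standard and implicitly needed, that the scores are almost surely distinct; this can be enforced by random tie-breaking. Then the probability equals exactly
\[
\tfrac{k}{n+1}<\tfrac{(n+1)(1-\alpha)+1}{n+1}=1-\alpha+\tfrac{1}{n+1}.
\]

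The main obstacle is bookkeeping rather than conceptual. First, since $S_{\text{LF}}$ is an infimum, I must make sure $S_{n+1}\le\hat q$ exactly matches membership at the boundary; this is what \Cref{thm:LFEquivalence} supplies, because the infimum is attained. Second, the upper bound genuinely requires a no-ties assumption. Ties can occur here, for example when several calibration scores equal the same $g_{\text{LF}}$ value, or when $S=\infty$ because $g_{\text{LF}}(x)=\infty$ and $j^*=1$. So I would state continuity or tie-breaking explicitly as the hypothesis under which the strict upper inequality holds.
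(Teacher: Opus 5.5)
Your proposal is correct and is essentially the paper's proof: you reduce coverage to the event $S_{n+1}\le\hat q$ via \Cref{thm:LFEquivalence}, then use exchangeability of the scores and the rank of $S_{n+1}$ with $k=\lceil (n+1)(1-\alpha)\rceil$, assuming distinct scores to get exactly $k/(n+1)$ and hence the strict upper bound. Your bookkeeping is slightly more careful than the paper's, which simply assumes $k\le n$ and non-degenerate scores: your lower bound survives ties, and you rightly note that $S_{\text{LF}}=\infty$ whenever $j^*=1$; such ties cannot be broken by additive noise, so distinctness must be taken as an explicit hypothesis for the upper bound, as you propose.
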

\begin{proof}
\looseness=-1 Since the fixed function $S_{\text{LF}}$ is applied to each element of the exchangeable sequence $(x_1, y^*_1), \dots, (x_{n+1}, y^*_{n+1})$, the resulting sequence $S_1, \dots, S_{n+1}$ is also exchangeable. We assume for simplicity that the scores are non-degenerate, because noise can easily be added to break ties. Let $S_{(1)} \le S_{(2)} \le \dots \le S_{(n)}$ be the order statistics of the calibration scores so that the empirical quantile $\hat{q}$ can be defined as $S_{(k)}$, where $k = \lceil (n+1)(1-\alpha) \rceil$ (assuming $n\geq \tfrac{1}{\alpha} - 1$ to ensure $k \leq n$). The event $S_{n+1} \leq \hat{q}$ occurs if and only if $S_{n+1}$ is among the $k$ smallest scores overall. From exchangeability, all orderings are equally likely, so $\mathbb{P}[S_{n+1} \leq \hat{q}] = \frac{k}{n+1}$. By \Cref{thm:LFEquivalence} this event is equivalent to $y^* \in F_{\text{LF}}(x_{n+1}; \hat{q})$. Hence, $\mathbb{P}[y^* \in F_{\text{LF}}(x_{n+1}; \hat q)] = \frac{\lceil (n+1)(1-\alpha) \rceil}{n+1} \ge 1 - \alpha$. 
For the upper bound, we simply note that $\frac{\lceil (n{+}1)(1{-}\alpha) \rceil}{n+1} {<} \frac{1{+}(n{+}1)(1{-}\alpha) }{n+1} {=}  1 {-} \alpha {+} \frac{1}{n+1}$. 
\end{proof}
Beyond generating contiguous prediction sets, LF uses fewer scoring function evaluations on average for inference when $g_\text{LF}$ is monotonic. LF starts with the first suffix $c_{\ell:\ell}$ and evaluates each longer suffix $c_{j:\ell}$ until one with $g_{\text{LF}}(c_{j:\ell}) > q$ is found, then returns $c_{j+1:\ell}$. When errors are evenly distributed over the length $\ell$, the average number of evaluations with a strong $g_\text{LF}$ is only $\tfrac{\ell+1}{2}$.

Of course, there is nothing special about filtering from the left; Right Filtration (RF) can easily be defined which filters from the right, returning a \emph{prefix} of $x$. Using a similarly defined scoring function $g_{\text{RF}}$ we write the relevant prefixes as $\mathcal{I}_{\text{RF}}:= \{c_{1: k}\}_{k=0}^{\ell}$ and define the right-filtering function $F_{\text{RF}}(x; q) := \argmax_{c_{1:k} \in \mathcal{I}_{\text{RF}}} \big(\vert c_{1: k}\vert \mid g_{\text{RF}}(c_{1: k}) \leq q\big)$. Based on these definitions, the conformal score is $S_{\text{RF}}(x, y^*) := \inf\big(q\in \mathbb{R^+} \mid y^* \in F_{\text{RF}}(x; q)\big)$, and prediction sets are generated as $C_{\text{RF}}(x_{n+1}; \hat q) = F_{\text{RF}}(x_{n+1}; \hat q)$. RF gives coverage as a straightforward extension of \Cref{thm:LF} and is advantageous over LF if agents tend to fail earlier in their trajectory rather than later.
\subsubsection{Two-Way Filtration}\label{sec:twf}
\begin{wrapfigure}[9]{r}{0.48\textwidth}
    \centering
    \includegraphics[width=0.95\linewidth, trim={48 4 30 0}, clip]{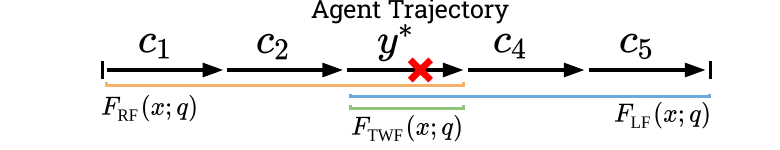}
    \caption{\footnotesize Two-way filtering with $F_\text{TWF}(x;q)$ uses the intersection of filtering from the right and left. The smallest $q$ which retains the decisive error $y^*$  is the conformal score $S_\text{TWF}(x, y^*)$.}
    \label{fig:TWF}
\end{wrapfigure}
Filtering from one direction has downsides; when the decisive error is at the start (end), the suffix (prefix) which covers the ground truth will contain the entire trajectory. Moreover, when the decisive error is near the middle, neither LF nor RF can isolate it. However, these limitations can be avoided by building on LF and RF with Two-Way Filtration (TWF). Bidirectional filtering allows more precise isolation of the decisive error in principle, regardless of where in the trajectory it occurs.

There are many possible ways to define a bidirectional filtration. We present a version that uses the \emph{intersection} of left and right filtered subintervals. Using $g_\text{LF}$ and $g_{\text{RF}}$ as above, we define the two-way filtering function as
\begin{equation}\label{eq:filtering_two_way}
    F_{\text{TWF}}(x; q) := F_{\text{LF}}(x; q) \cap F_{\text{RF}}(x; q).
\end{equation}
This function finds the longest suffix and prefix that each score at most $q$, and takes their intersection. Note that this function still satisfies $F_{\text{TWF}}(x; \infty) = x$, as well as nesting: (All formal proofs are given in \Cref{app:twf-proof})
\begin{restatable}{lemma}{TWFNesting}
\label{thm:TWFNesting}
For any $x$ and thresholds $0\leq q_1 \leq q_2$, $F_{\text{TWF}}(x; q_1) \subseteq F_{\text{TWF}}(x; q_2)
$. 
\end{restatable}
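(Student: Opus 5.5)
The plan is to reduce the claim to the two one-directional nesting results and then use monotonicity of set intersection. By definition, $F_{\text{TWF}}(x; q) = F_{\text{LF}}(x; q) \cap F_{\text{RF}}(x; q)$, so it suffices to show that each factor is nested in $q$.

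\Cref{thm:LFNesting} already gives $F_{\text{LF}}(x; q_1) \subseteq F_{\text{LF}}(x; q_2)$ for $q_1 \le q_2$. For the right filtration I will prove the mirror statement $F_{\text{RF}}(x; q_1) \subseteq F_{\text{RF}}(x; q_2)$ by repeating that argument with prefixes in place of suffixes, and I expect this to be the only step needing any care.
\begin{itemize}
\item The family $\mathcal{I}_{\text{RF}} = \{c_{1:k}\}_{k=0}^{\ell}$ is totally ordered by inclusion, and a prefix's length determines it uniquely, so the argmax in $F_{\text{RF}}$ is well defined.
\item The feasible set $\{k : g_{\text{RF}}(c_{1:k}) \le q\}$ is nonempty for every $q \ge 0$, since $k=0$ gives $g_{\text{RF}}(\varnothing) = 0 \le q$.
\item This feasible set can only grow as $q$ increases from $q_1$ to $q_2$, so its maximum $k$ is nondecreasing.
\item Prefixes are nested, so a larger $k$ gives a superset. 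Hence $F_{\text{RF}}(x;q_1) \subseteq F_{\text{RF}}(x;q_2)$.
\end{itemize}
No monotonicity assumption on $g_{\text{RF}}$ is needed, exactly as in \Cref{thm:LFNesting}.

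With both inclusions in hand, intersection is monotone in each argument: if $A_1 \subseteq A_2$ and $B_1 \subseteq B_2$, then $A_1 \cap B_1 \subseteq A_2 \cap B_2$. Applying this with $A_i = F_{\text{LF}}(x; q_i)$ and $B_i = F_{\text{RF}}(x; q_i)$ gives $F_{\text{TWF}}(x; q_1) \subseteq F_{\text{TWF}}(x; q_2)$.

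The argument also covers the degenerate case where the intersection of a suffix $c_{j:\ell}$ and a prefix $c_{1:k}$ is empty (when $j > k$). Here $c_{j:\ell} \cap c_{1:k} = c_{j:k}$, which is $\varnothing$ by the stated conventions, and the empty set is trivially contained in any set, so the inclusion still holds.
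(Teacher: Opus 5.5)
Your proposal is correct and matches the paper's proof: nesting of $F_{\text{LF}}$ from \Cref{thm:LFNesting}, the mirror statement for $F_{\text{RF}}$, and monotonicity of intersection under inclusion. The paper simply asserts the right-filtration analogue, whereas you spell out the prefix argument and the empty-intersection case; both additions are correct but not essential.
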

\begin{proofsketch}
The result follows from nesting for L/RF, and set inclusion under intersection: for any sets $A, B, C ,D$, if $A \subseteq B$ and $C \subseteq D$, then $A \cap C \subseteq B \cap D$.
\end{proofsketch}
On top of this we can define the conformal score function
\begin{equation}\label{eq:two-way-filter-score}
    S_{\text{TWF}}(x, y^*) := \inf\big(q\in \mathbb{R^+} \mid y^*\in F_{\text{TWF}}(x; q)\big),
\end{equation}
which operates the same way as before, effectively finding the smallest threshold $q$ under which two-way filtering retains the decisive error. Although $S_{\text{TWF}}$ appears to involve three optimizations, its computation can be simplified to only two scoring function evaluations:

\begin{restatable}{proposition}{TWFComputation}
\label{thm:TWFComputation}
$S_{\text{TWF}}(x, y^*)$ as defined in \Cref{eq:two-way-filter-score} can be expressed as
\begin{equation}\label{eq:TWFIntermediate}
S_{\text{TWF}}(x, y^*) = \max\big(S_{\text{LF}}(x, y^*), S_{\text{RF}}(x, y^*)\big).
\end{equation}
When $g_\text{LF}$ and $g_\text{RF}$ obey the monotonicity condition in \Cref{eq:g_LF_nesting}, $S_{\text{TWF}}$ further simplifies to
\begin{equation}\label{eq:TWFComputation}
    S_{\text{TWF}}(x, y^*) = \max\big(g_\text{LF}(c_{j^*:\ell}), g_\text{RF}(c_{1:j^*})\big).
\end{equation}
\end{restatable}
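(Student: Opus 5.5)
The plan is to reduce membership of $y^*$ in the two-way filtered set to membership in each one-sided set, and then turn that into a statement about infima. Membership in the intersection is by definition
\[
y^*\in F_{\text{TWF}}(x;q) \iff y^*\in F_{\text{LF}}(x;q)\ \text{and}\ y^*\in F_{\text{RF}}(x;q),
\]
so the sets $A_{\text{TWF}}=\{q\in\mathbb{R}^+ \mid y^*\in F_{\text{TWF}}(x;q)\}$, $A_{\text{LF}}$ and $A_{\text{RF}}$ (defined analogously) satisfy $A_{\text{TWF}}=A_{\text{LF}}\cap A_{\text{RF}}$.

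The key step is to describe each one-sided set as a closed up-ray. By \Cref{thm:LFEquivalence}, $q\in A_{\text{LF}}\iff S_{\text{LF}}(x,y^*)\le q$, so $A_{\text{LF}}=[S_{\text{LF}}(x,y^*),\infty)$. The same holds for RF by the mirrored version of \Cref{thm:LFEquivalence}. Its proof carries over verbatim with prefixes $c_{1:k}$, $k\ge j^*$, in place of suffixes: the infimum is attained at $\min\{g_{\text{RF}}(c_{1:k})\mid k\ge j^*\}$, and upward closure follows from the RF analogue of \Cref{thm:LFNesting}. The intersection of two such rays is $[\max(S_{\text{LF}},S_{\text{RF}}),\infty)$, whose infimum gives \Cref{eq:TWFIntermediate}. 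Both sets are nonempty because $g(x)=\infty$ is excluded but $F(x;q)\to x$ for large finite $q$. More carefully, the minimum over $j\le j^*$ includes finite values whenever some proper suffix containing $y^*$ has a finite score; otherwise both sides equal $+\infty$ and the identity holds trivially. I would note this edge case briefly.

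For \Cref{eq:TWFComputation}, I would substitute \Cref{thm:LFComputation}, which gives $S_{\text{LF}}=g_{\text{LF}}(c_{j^*:\ell})$ under monotonicity. Its mirror gives $S_{\text{RF}}=g_{\text{RF}}(c_{1:j^*})$, since under monotonicity every prefix containing $y^*$ contains $c_{1:j^*}$ and so scores at least $g_{\text{RF}}(c_{1:j^*})$, with that value attained.

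The main obstacle is making sure the infima are attained, so that the sets really are closed rays rather than open ones; otherwise the max-of-infima identity could fail. Attainment follows from finiteness: only finitely many suffixes and prefixes exist, so the relevant minima are over finite sets, exactly as in the sketch of \Cref{thm:LFEquivalence}. I would state the RF analogues of \Cref{thm:LFNesting}, \Cref{thm:LFComputation} and \Cref{thm:LFEquivalence} explicitly as immediate by symmetry (reverse the index order), and then the rest is the set-intersection argument above.
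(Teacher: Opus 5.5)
Your proposal is correct and follows essentially the same route as the paper: you split $y^*\in F_{\text{TWF}}(x;q)$ into the two one-sided memberships and identify each one-sided threshold set as $\{q\ge S_{\text{LF}}(x,y^*)\}$ (resp.\ $S_{\text{RF}}$) via nesting, which is the content of \Cref{thm:LFEquivalence}; you then take the infimum over the intersection and substitute \Cref{thm:LFComputation} and its prefix mirror. Two small remarks: the max-of-infima step only needs the two threshold sets to be upward closed (which nesting gives), so attainment is not really the obstacle you flag; and your first nonemptiness claim is off, since $g_{\text{LF}}(x)=\infty$ means $x$ is never selected at finite $q$, though your corrected $+\infty$ convention (e.g.\ when $j^*=1$) already handles this.
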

\begin{proofsketch}
$q$ is a valid threshold ($y^*\in F_{\text{TWF}}(x; q)$) only when $y^*$ is included by both $F_{\text{LF}}$ and $F_{\text{RF}}$, which means $q$ is sufficiently high for both $S_\text{LF}$ and $S_\text{RF}$. We must use the higher of these two thresholds, giving \Cref{eq:TWFIntermediate}.
\Cref{eq:TWFComputation} follows from substituting in \Cref{thm:LFComputation}.
\end{proofsketch}
Like for LF, the TWF conformal algorithm computes $S_{\text{TWF}}$ for each calibration datapoint, sets the conformal threshold $\hat q$ as the $\frac{\lceil(n+1)(1-\alpha)\rceil}{n}$ quantile, and predicts $C_\text{TWF}(x_{n+1};\hat q) = F_{\text{TWF}}(x_{n+1}; \hat q)$ for any test datapoint $x_{n+1}$. $F_{\text{TWF}}(x_{n+1}; \hat q)$ will tend to be a short subinterval when $g_{\text{LF}}$ and $g_{\text{RF}}$ both narrow in on the same steps. This algorithm satisfies \Cref{eq:coverage-guarantee-contiguous} with a theorem and proof similar to \Cref{thm:LF} (see \Cref{thm:TWF}). The core prerequisite is the analogue of \Cref{thm:LFEquivalence}:
\begin{restatable}{lemma}{TWFEquivalence}
\label{thm:TWFEquivalence}
For a fixed $\hat q$, we have $S_{\text{TWF}}(x, y^*) \leq \hat{q} \iff y^* \in F_{\text{TWF}}(x; \hat{q})$.
\end{restatable}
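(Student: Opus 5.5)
The plan mirrors the proof of \Cref{thm:LFEquivalence}, using \Cref{thm:TWFNesting} and the reduction $S_{\text{TWF}} = \max(S_{\text{LF}}, S_{\text{RF}})$ from \Cref{thm:TWFComputation}.

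For the backward direction ($\Leftarrow$), suppose $y^*\in F_{\text{TWF}}(x;\hat q)$. Then $\hat q$ belongs to the set over which the infimum in \Cref{eq:two-way-filter-score} is taken, so $S_{\text{TWF}}(x,y^*)\le \hat q$ immediately.

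For the forward direction ($\Rightarrow$), we must show the infimum is attained. Once that is known, nesting does the rest: $y^*\in F_{\text{TWF}}(x;S_{\text{TWF}})\subseteq F_{\text{TWF}}(x;\hat q)$ by \Cref{thm:TWFNesting}.

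To get attainment, the cleanest route is through the max formula. By \Cref{thm:LFEquivalence} and its right-filtration analogue (proved identically, with prefixes in place of suffixes), we have
\[
S_{\text{LF}}(x,y^*)\le \hat q \iff y^*\in F_{\text{LF}}(x;\hat q),
\qquad
S_{\text{RF}}(x,y^*)\le \hat q \iff y^*\in F_{\text{RF}}(x;\hat q).
\]
Then
\[
S_{\text{TWF}}(x,y^*)\le\hat q \iff \max(S_{\text{LF}},S_{\text{RF}})\le \hat q \iff \bigl(S_{\text{LF}}\le\hat q \text{ and } S_{\text{RF}}\le\hat q\bigr) \iff y^*\in F_{\text{LF}}(x;\hat q)\cap F_{\text{RF}}(x;\hat q)=F_{\text{TWF}}(x;\hat q).
\]
This gives both directions at once and sidesteps attainment for TWF directly.

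The main obstacle is the legitimacy of the max identity \Cref{eq:TWFIntermediate} itself, which rests on attainment of the one-sided infima. That holds because $F_{\text{LF}}(x;q)$ is a step function of $q$ taking finitely many values. It changes only at the finitely many values $g_{\text{LF}}(c_{j:\ell})$, and the constraint in \Cref{eq:filtering_left} is non-strict ($\le q$), so each set $\{q : y^*\in F_{\text{LF}}(x;q)\}$ is a closed up-ray $[q_{\min},\infty)$, with $q_{\min}$ as in the sketch of \Cref{thm:LFEquivalence}. The same holds for RF.

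I would therefore state briefly that the valid sets for LF and RF are closed up-rays. Their intersection is then the closed up-ray starting at $\max(q_{\min}^{\text{LF}},q_{\min}^{\text{RF}})$, and the TWF valid set equals this intersection because membership in $F_{\text{TWF}}$ means membership in both $F_{\text{LF}}$ and $F_{\text{RF}}$. This yields both \Cref{eq:TWFIntermediate} and the equivalence.
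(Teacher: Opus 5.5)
Your proposal is correct and follows essentially the same route as the paper. The paper's proof is exactly your chain: it uses the intersection to split $y^*\in F_{\text{TWF}}(x;\hat q)$ into the LF and RF memberships, applies \Cref{thm:LFEquivalence} and its RF analogue, and rewrites the conjunction as $\max(S_{\text{LF}},S_{\text{RF}})\le\hat q$, which is $S_{\text{TWF}}(x,y^*)\le\hat q$ by \Cref{thm:TWFComputation}; your remarks on the closed up-rays $[q_{\min},\infty)$ just make explicit what the paper relies on in the proofs of \Cref{thm:LFEquivalence} and \Cref{thm:TWFComputation}.
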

\begin{proofsketch}
From the use of intersection in $F_{\text{TWF}}(x; \hat{q})$ (\Cref{eq:filtering_two_way}), and the result of \Cref{thm:LFEquivalence}, $y^*\in F_{\text{TWF}}(x, \hat q) \iff S_{\text{LF}}(x, y^*)\leq\hat q \wedge S_{\text{RF}}(x, y^*)\leq\hat q$. Equivalently we write $\max\big(S_{\text{LF}}(x, y^*),S_{\text{RF}}(x, y^*)\big) \leq \hat q$, which is the same as $S_{\text{TWF}}(x, y^*)\leq \hat{q}$ (\Cref{thm:TWFComputation}).
\end{proofsketch}
While we described these algorithms using the language of agent trajectories, they can equally be applied to any type of sequential data $x$ with ground truth $y^*\in x$, for example electronic health records where $x$ is a sequence of vitals measurements, and $y^*$ is the first warning sign that should trigger medical intervention \cite{razavian2015temporal}.

\begin{wraptable}[6]{r}{0.5\textwidth}
\centering
\setlength{\tabcolsep}{2.5pt}
\setlength{\extrarowheight}{-1pt}
\setlength{\aboverulesep}{0.5ex}
\setlength{\belowrulesep}{0.5ex}
\setlength{\cmidrulesep}{0.3ex}
\vspace{-14pt}
\caption{Conformal Algorithm Properties}
\label{tab:comparison}
\footnotesize
\begin{tabular}{lccc}
\toprule
Method & Contiguous & Inference NFE & Cov. Upper Bound\\
\midrule
VCP & \textcolor{red}{$\bm{\times}$} & $\ell$ & \textcolor{darkgreen}{$\bm{\checkmark}$} \\
CRSVP &  \textcolor{darkgreen}{$\bm{\checkmark}$} & $\ell$ & \textcolor{red}{$\bm{\times}$}\\
L/RF  & \textcolor{darkgreen}{$\bm{\checkmark}$} & $\approx\frac{1}{2}(\ell+1)$ & \textcolor{darkgreen}{$\bm{\checkmark}$}\\
TWF  &  \textcolor{darkgreen}{$\bm{\checkmark}$} & $\ell$ & \textcolor{darkgreen}{$\bm{\checkmark}$} \\
\bottomrule
\end{tabular}
\end{wraptable}

The five conformal algorithms we compare are summarized in \Cref{tab:comparison}, showing which generate contiguous prediction sets, the number of function evaluations (NFE) of $g$ needed for each test datapoint in the case where $g$ is strong and errors are uniformly distributed, and whether an upper bound on coverage is guaranteed.\vspace{-8pt}
\subsection{Scoring Functions for Agent Error Attribution}\label{sec:scoring function}

Each of the preceding conformal algorithms requires a scoring function $g$ which estimates how likely a step, or set of steps, is to contain the decisive error. Since MAS traces are primarily composed of textual agent interactions, our scoring functions rely on LLM-based components to map agent inputs and outputs to numerical scores. We compare three LLM regimes as outlined in \Cref{subsec:mas-error-attribution}: 
\begin{enumerate}[leftmargin=*, nosep]
    \item \emph{Naive LLM-as-a-judge} is implemented by prompting gpt-4o-mini to estimate error likelihoods with information about the task and step, as in \cite{zhang2025agent};
    \item \emph{Prompt and context engineering} produces step-level likelihood estimates using multiple LLMs with role-specific prompts, inspired by ECHO \cite{banerjee2025hierarchical}. Scores from LLMs with different roles are averaged to get a final likelihood.
    \item A \emph{fine-tuned specialized LLM} is implemented by following the data generation and training paradigm of AEGIS~\cite{kong2025aegis} to fine-tune a Qwen3-1.7B model \cite{yang2025qwen3technicalreport}.
\end{enumerate}
Additional details on prompts, training, and data are given in \Cref{app:c}. For VCP, step-level scores as described above are used directly. However, L/R/TWF and CRSVP require set-level scores. To enable efficient computation in L/R/TWF we design $g$ to obey monotonicity (\Cref{eq:g_LF_nesting}) by aggregating step-level scores. Specifically, we use summation and normalize by the trajectory length $\ell$ to make conformal scores for different datapoints more comparable: $g(c_{j:k}) = \frac{1}{\ell} \sum_{i=j}^{k} \text{LLM}(c_i)$. We experiment with alternative monotonic aggregations, namely Max and LogSumExp, in \Cref{app: additional_res}, and discuss circumstances when they may be preferred. Below, we evaluate the discriminatory power of $g$ independently from CP algorithms by treating it as a multiclass classifier.
\subsection{Conformal Agent Rollbacks}
\label{sec:rollback}
\begin{wrapfigure}[6]{r}{0.48\textwidth}
    \centering
    \vspace{-26pt}
    \includegraphics[width=0.9\linewidth, trim={45 48 33 0}, clip]{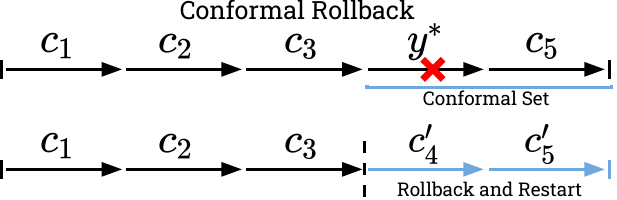}
    \vspace{-6pt}
    \caption{\footnotesize After generating a conformal set for a failed task, we roll back the state of the MAS to the first step in the set, and restart the agent with information on the failed trace.}
    \label{fig:rollback}
\end{wrapfigure}
CP sets for agent error attribution have multiple uses. They can be used by humans for manual debugging; a person can focus only on the steps within the set and have good coverage of true errors. Here, contiguity is a great benefit---it is much easier to debug several consecutive steps than to parse through a scattered set.

However, our main downstream application is automated agent recovery. Knowing that an agent has failed, we wish for it to learn from its mistakes and retry parts of the task. Optimally, the agent's state must be rolled back in the trajectory only far enough to cover the decisive error. Rolling back further is inefficient. However, error attribution accuracy is low for point prediction \cite{zhang2025agent, banerjee2025hierarchical, zhu2025raffles}.

Prediction sets with coverage guarantees provide the solution for automated rollbacks of failed trajectories. Given a conformal set, we roll back the state of the MAS to the first step in the set, and restart the trajectory, as depicted in \Cref{fig:rollback}. The coverage guarantee ensures with high confidence that we roll back far enough to correct the decisive error, while contiguity ensures that we don't roll back excessively far. Upon restarting, we add information to the prompt about the steps taken previously, and instruct the MAS to replan its trajectory to avoid making the same mistakes.

\section{Experiments}
\label{sec:experiments}

\subsection{Datasets}
\label{sec:datasets}
We evaluate CP algorithms, scoring functions, and downstream task performance on both a real-world benchmark and synthetic MAS traces. The \textbf{Who\&When} dataset~\citep{zhang2025agent} (MIT license) is a benchmark for step-level error attribution in MAS. Each of the 184 real-world examples contains a full agent execution trace annotated with the decisive error step.\footnote{We remove inconsistent datapoints where the error index is greater than the trajectory length.} Who\&When is small-scale and provides limited variety over agents and tasks, but it is the only existing academic dataset with step-level error annotations by humans.

\begin{wrapfigure}[12]{r}{0.48\textwidth}
    \centering
    \vspace{-10pt}
    \includegraphics[width=0.99\linewidth]{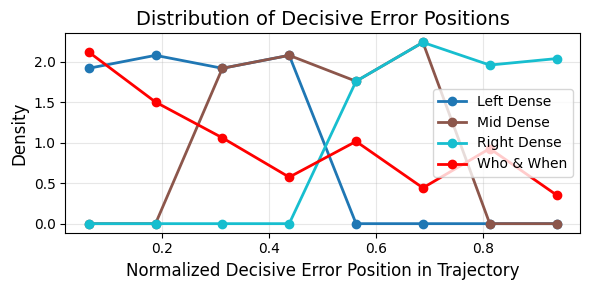}
    \vspace{-16pt}
    \caption{\footnotesize Normalized decisive error position distributions for the real-world benchmark (Who\&When), and controlled error injection datasets (Left/Mid/Right Dense) on GSM8k.
    }
    \label{fig:dense}
\end{wrapfigure}
In the absence of other real-world data, and to provide greater variety, we follow existing practices \citep{kong2025aegis} to synthetically generate failed agent trajectories through error injection. To induce errors at controlled steps, we inject instructions to create errors into the agent's prompt, using the error mode taxonomy of~\citet{pan2025why}. For a given trajectory, one step is selected uniformly at random, and the agent is restarted in that state with a modified prompt describing the desired error mode. Given that the overall trajectory fails, the injected step is labeled as the decisive error. We use two mathematical reasoning datasets, MATH~\citep{hendrycks2021math} and GSM8k~\citep{cobbe2021gsm8k} (MIT licenses), paired with two representative multi-agent architectures, MACNET~\citep{qian2025macnet} and DyLAN~\citep{liu2024dyna}. For each dataset–architecture combination, we generate 1200 failed trajectories.

Complex tasks generally have some steps that are harder to complete than others, leading to non-uniform decisive error distributions. This distribution is shown in \Cref{fig:dense} for the Who\&When dataset, which tends to have decisive errors early on. To mimic this property, and explore the impact of distribution on error attribution performance, we construct variants of the synthetic data by dividing the normalized trajectory length into thirds, and subsampling conditional on decisive error location. These are the \textbf{Left}, \textbf{Mid}, and \textbf{Right Dense} variants of GSM8k in \Cref{fig:dense}. Additional details and analysis are provided in Appendix~\ref{app:c}.
\subsection{Evaluation Metrics}
Scoring functions $g$ can be viewed as classifiers on an $\ell$-way task---predicting the decisive error location. Hence, we evaluate them using AUROC, AUPRC, and accuracy. Since $\ell$ can be large, baseline levels of these metrics are very low.

\looseness=-1 Given a test dataset $\{(x_i, y_i^*)\}_{i=n+1}^{n+m}$ and predicted sets $\{C(x_i)\}_{i=n+1}^{n+m}$, we evaluate conformal agent error attribution with the following metrics: 
\begin{equation}\label{eq:eval-metrics}
\text{EC} \;=\; \frac{1}{m}\sum_{i=n+1}^{n+m} \mathbb{I}\!\left[y_i^* \in C(x_i)\right]; \qquad \text{RR} \;=\; \frac{1}{m}\sum_{i=n+1}^{n+m}
\left(1 - \frac{|C(x_i)|}{\ell_i}\right).
\end{equation}
\textbf{Empirical Coverage} (EC) measures how often the predicted set contains the decisive error. EC should be at least $1-\alpha$, the target coverage, and respect the corresponding upper bound where applicable. \textbf{Removal Rate} (RR) measures how much of the trajectory is filtered out, essentially measuring prediction set size, where $\ell_i$ denotes the number of steps in $x_i$. A higher removal rate indicates smaller conformal sets, and more precise localization of the error step, given equal EC.

Finally, rollback performance is measured across three axes: Success Rate - the fraction of tasks successfully completed after the rollback; Coverage - the fraction of tasks where the state is rolled back at least to the decisive error; and Cost - the fraction of steps rolled back, needing to be redone. As a baseline, we test the agent restarting from the single most likely predicted step (Top-1).

\section{Results}
\label{sec:results}

\begin{figure}[t]
    \centering
    \begin{minipage}[t]{0.245\textwidth}
        \centering
        \includegraphics[width=\linewidth]{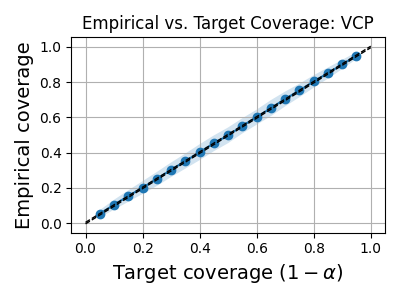}
    \end{minipage}
    \hfill
    \begin{minipage}[t]{0.245\textwidth}
        \centering
        \includegraphics[width=\linewidth]{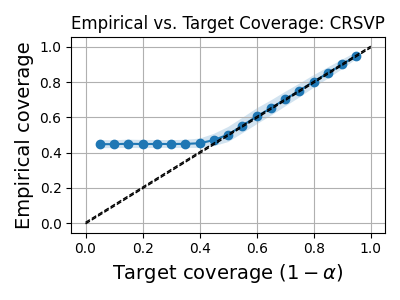}
    \end{minipage}
    \hfill
    \begin{minipage}[t]{0.245\textwidth}
        \centering
        \includegraphics[width=\linewidth]{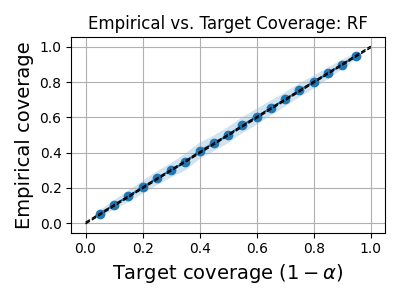}
    \end{minipage}
    \hfill
    \begin{minipage}[t]{0.245\textwidth}
        \centering
        \includegraphics[width=\linewidth]{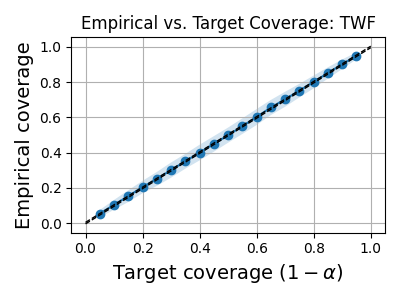}
    \end{minipage}
    \caption{\footnotesize Empirical Coverage vs.\ Target Coverage for CP methods with the fine-tuned LLM scoring function on MATH.}
    \label{fig:coverage}
\end{figure}
\subsection{Empirical Verification of Coverage Guarantee}
First, we empirically verify that CP algorithms considered in this work satisfy their respective coverage guarantees from \Cref{sec:method}. \Cref{fig:coverage} shows empirical coverage as a function of the target coverage $1-\alpha$, where we average EC over 1000 random, even splits of the calibration and test sets. The shaded regions show one standard deviation. EC is above the lower bound for all methods, consistent with the theoretical guarantees. VCP, RF, and TWF all obey their respective upper bounds, leading to exact linear behaviour. A deviation is observed in the low $1-\alpha$ regime for CRSVP, but this is not a violation as CRSVP does not provide an upper bound on coverage.
\subsection{Scoring Function Evaluation}
\begin{wraptable}[10]{r}{0.5\textwidth}
\centering
\setlength{\tabcolsep}{2.5pt}
\setlength{\extrarowheight}{-1pt}
\setlength{\aboverulesep}{0.5ex}
\setlength{\belowrulesep}{0.5ex}
\setlength{\cmidrulesep}{0.3ex}
\caption{\footnotesize Step-level discrimination metrics across scoring functions on combined MATH and GSM8k test set.}
\footnotesize
\begin{tabular}{lccc}
\toprule
 Scoring function & AUROC & AUPRC & Accuracy \\
\midrule
 Random Guessing (baseline) & 0.500 & 0.118 & 0.118 \\
 \midrule
Naive LLM (gpt-4o-mini)
&0.519  &0.128 &0.110  \\
Role Prompted (gpt-4o-mini)
&0.554  &0.161  &0.164 \\
Fine-tuned (Qwen3-1.7B)
 & \textbf{0.762} & \textbf{0.382} & \textbf{0.731} \\
\bottomrule
\end{tabular}
\label{tab:scoring_function}
\end{wraptable}
Next, we compare the three regimes of scoring functions for discriminatory power as binary classifiers. \cref{tab:scoring_function} shows results on a combination of GSM8k and MATH test sets, matching the training distribution for the fine-tuned LLM. Increasing the complexity of $g$ does pay off; fine-tuning on the error classification task considerably increases its power. The naive prompted LLM is hardly better than random guessing, while prompt engineering provides small improvements. These results are consistent with performance levels observed in prior studies~\cite{zhang2025agent, banerjee2025hierarchical, kong2025aegis}.

\begin{table*}[t]
\centering
\scriptsize
\setlength{\tabcolsep}{2.5pt}
\setlength{\extrarowheight}{-1pt}
\setlength{\aboverulesep}{0.5ex}
\setlength{\belowrulesep}{0.5ex}
\setlength{\cmidrulesep}{0.3ex}
\caption{\footnotesize Removal rate ($\pm$ std over 1000 calibration/test splits) for conformal algorithms and scoring functions at $1-\alpha=0.8$. Higher removal rates indicate more precise error attribution, with the best performing conformal algorithm in each column bolded among algorithms which produce contiguous sets. Averages are taken over conformal algorithms to evaluate scoring functions.  \textcolor{red}{$\bm{\times}$} indicates that fine-tuning is not possible due to insufficient data.}

\resizebox{1.0\textwidth}{!}{%
\begin{tabular}{clccccccccc}
\toprule
\multirow{0.5}{*}{}
&\multirow{2.5}{*}{\makecell{Conformal \\ Method}}
& \multirow{2.5}{*}{\makecell{Who\&When}}
& \multicolumn{2}{c}{GSM8k Left Dense}
& \multicolumn{2}{c}{GSM8k Mid Dense}
& \multicolumn{2}{c}{GSM8k Right Dense} 
& \multicolumn{2}{c}{MATH} \\
\cmidrule(lr){4-5} \cmidrule(lr){6-7} \cmidrule(lr){8-9} \cmidrule(lr){10-11}
&
&
& DyLAN & MACNET
& DyLAN & MACNET
& DyLAN & MACNET
& DyLAN & MACNET \\
\midrule

\multirow{5}{*}{\rotatebox[origin=c]{90}{\parbox{1.75cm}{\centering Naive LLM \\ gpt-4o-mini}}}&
Vanilla
& 0.20$\pm$0.06
& 0.21$\pm$0.02 & 0.24$\pm$0.03
& 0.18$\pm$0.03 & 0.15$\pm$0.03
& 0.16$\pm$0.03 & 0.17$\pm$0.03
& 0.19$\pm$0.02 & 0.20$\pm$0.03 \\
\cmidrule(lr){2-11} 
& CRSVP
& 0.21$\pm$0.04
& 0.58$\pm$0.03 & 0.54$\pm$0.02
& 0.29$\pm$0.04 & 0.21$\pm$0.04
& 0.21$\pm$0.03 & 0.10$\pm$0.03
& 0.30$\pm$0.03 & 0.20$\pm$0.03 \\
& Left Filtering
& 0.12$\pm$0.04
& 0.10$\pm$0.01 & 0.24$\pm$0.01
& 0.31$\pm$0.01 & 0.47$\pm$0.02
& 0.53$\pm$0.01 & \textbf{0.60$\pm$0.02}
& 0.18$\pm$0.02 & 0.19$\pm$0.02 \\
& Right Filtering
& \textbf{0.31$\pm$0.04}
& \textbf{0.64$\pm$0.02} & 0.71$\pm$0.01
& 0.43$\pm$0.02 & 0.45 $\pm$0.01
& 0.20$\pm$0.02 & 0.20$\pm$0.01
& \textbf{0.27$\pm$0.02} & 0.20$\pm$0.02 \\
& Two-Way
& 0.19$\pm$0.04
& 0.17$\pm$0.03 & 0.22  $\pm$0.03
& \textbf{0.59$\pm$0.03} & \textbf{0.59$\pm$0.01}
& 0.40$\pm$0.03 & 0.21$\pm$0.03 
& \textbf{0.27$\pm$0.02} & 0.15$\pm$0.03 \\
\cmidrule(lr){2-11} 
& Average
& 0.21
& 0.34 & 0.39
& 0.36 & 0.37
& 0.30 & 0.26
& 0.24 & 0.19 \\
\midrule
\multirow{5}{*}{\rotatebox[origin=c]{90}{\parbox{1.5cm}{\centering Role-prompted \\ gpt-4o-mini}}}&
 Vanilla
& 0.19$\pm$0.06
& 0.30$\pm$0.04 & 0.24$\pm$0.04
& 0.27$\pm$0.04 & 0.27$\pm$0.03
& 0.22$\pm$0.04 & 0.25$\pm$0.04
& 0.19$\pm$0.03 & 0.24$\pm$0.04 \\
\cmidrule(lr){2-11} 
& CRSVP
& 0.23$\pm$0.04
& 0.27$\pm$0.04 & 0.21$\pm$0.05
& 0.27$\pm$0.04 & 0.26$\pm$0.05
& 0.27$\pm$0.04 & 0.22$\pm$0.05
& \textbf{0.27$\pm$0.03} & 0.22$\pm$0.03 \\
& Left Filtering
& 0.12$\pm$0.04
& 0.10$\pm$0.02 & 0.31$\pm$0.01
& 0.36$\pm$0.01 & 0.53$\pm$0.01
& \textbf{0.64$\pm$0.02} & \textbf{0.60$\pm$0.02}
& 0.19$\pm$0.02 & 0.19$\pm$0.02 \\
& Right Filtering
& \textbf{0.30$\pm$0.05}
& \textbf{0.63$\pm$0.02} & 0.59$\pm$0.02
& 0.43$\pm$0.01 & 0.33$\pm$0.01
& 0.21$\pm$0.01 & 0.10$\pm$0.04
& \textbf{0.28$\pm$0.02} & 0.19$\pm$0.02 \\
& Two-Way
& 0.22$\pm$0.02
& 0.18$\pm$0.02 & 0.21$\pm$0.02
& \textbf{0.62$\pm$0.02} & \textbf{0.59$\pm$0.02}
& 0.42$\pm$0.02 & 0.20$\pm$0.02
& \textbf{0.29$\pm$0.02} & 0.18$\pm$0.02 \\
\cmidrule(lr){2-11} 
& Average
& 0.21
& 0.30 & 0.31
& 0.39 & 0.40
& 0.35 & 0.27
& 0.24 & 0.20 \\

\midrule
\multirow{5}{*}{\rotatebox[origin=c]{90}{\parbox{1.5cm}{\centering Fine-tuned\\Qwen3-1.7B}}}&
 Vanilla
& \textcolor{red}{$\bm{\times}$} 
& 0.78$\pm$0.03 & 0.82$\pm$0.02
& 0.66$\pm$0.03 & 0.75$\pm$0.02
& 0.34$\pm$0.04 & 0.63$\pm$0.03
& 0.33$\pm$0.03 & 0.62$\pm$0.03 \\
\cmidrule(lr){2-11} 
& CRSVP
& \textcolor{red}{$\bm{\times}$}
& \textbf{0.69$\pm$0.06} & \textbf{0.80$\pm$0.02} 
& 0.36$\pm$0.05 & 0.38$\pm$0.07
& 0.24$\pm$0.03 & 0.29$\pm$0.06
& \textbf{0.32$\pm$0.03} & \textbf{0.30$\pm$0.04} \\
& Left Filtering
& \textcolor{red}{$\bm{\times}$}
& 0.09$\pm$0.01 & 0.10$\pm$0.02
& 0.31$\pm$0.02 & 0.35$\pm$0.01
& 0.52$\pm$0.01 & \textbf{0.60$\pm$0.01}
& 0.18$\pm$0.02 & 0.19$\pm$0.02 \\
& Right Filtering
& \textcolor{red}{$\bm{\times}$}
& \textbf{0.65$\pm$0.02} & 0.63$\pm$0.02
& 0.44$\pm$0.01 & 0.33$\pm$0.01
& 0.20$\pm$0.01 & 0.10$\pm$0.02
& 0.27$\pm$0.01 & 0.20$\pm$0.02 \\
& Two-Way
& \textcolor{red}{$\bm{\times}$}
& 0.18$\pm$0.02 & 0.20$\pm$0.05
& \textbf{0.63$\pm$0.02} & \textbf{0.60$\pm$0.02}
& 0.40$\pm$0.02 & 0.19$\pm$0.05
& \textbf{0.29$\pm$0.02} & 0.19$\pm$0.02 \\
\cmidrule(lr){2-11} 
& Average
& \textcolor{red}{$\bm{\times}$}
& 0.48 & 0.51
& 0.48 & 0.48
& 0.33 & 0.36 
& 0.28 & 0.30 \\
\bottomrule
\end{tabular}
}
\label{tab:removal_rate_comparison_80}
\end{table*}\vspace{-2pt}
\subsection{Conformal Agent Error Attribution Evaluation}
\vspace{-4pt}
Our main comparison of error attribution examines the removal rate (\Cref{eq:eval-metrics}) for conformal algorithms and scoring functions across real and synthetic datasets, including trajectories created through two agentic frameworks. \cref{tab:removal_rate_comparison_80} displays the results using a target coverage of $80\%$. Results are shown as the mean and standard deviation over 1000 random splits of the calibration and test data.

Our first observation is that the efficacy of conformal algorithms depends on the distribution of errors in the dataset. The real-world Who\&When dataset has decisive errors heavily skewed toward early steps in the trajectory (see \Cref{fig:dense}). This distribution naturally aligns with RF which can return a short prefix, and accordingly shows the strongest performance. The synthetic datasets demonstrate this behaviour even more clearly. LF is the strongest algorithm on right-dense data, and TWF successfully isolates errors in the middle of trajectories. Conformal error attribution is most successful when the CP algorithm is matched to the data's error distribution. Luckily, this is simple to achieve in practice. Applying CP already requires a calibration dataset from a distribution similar to the test set. Practitioners can visualize the error distribution of their calibration data like in \Cref{fig:dense}, and choose the appropriate filtering algorithm, or automate the selection  based on the mean error location of trajectories, for example.

Notably, filtering algorithms are less dependent on the discriminatory power of the scoring function than VCP and CRSVP. While the baseline approaches tend to require the fine-tuned LLM scoring function to achieve high RR, filtering methods are largely insensitive. This means that L/R/TWF can be used with simple, easier to design scoring functions as long as they are properly matched to the data's error distribution. This is especially important for real-world data where labeling is expensive; Who\&When only supplies 184 labeled datapoints---enough to calibrate and test, but not nearly enough to also fine-tune an LLM. Our filtering algorithms remain more practical in this setting.
\subsection{Computational Efficiency}\label{sec:efficiency}
\begin{wraptable}[9]{r}{0.5\textwidth}
\centering
\setlength{\tabcolsep}{2.5pt}
\setlength{\extrarowheight}{-1pt}
\setlength{\aboverulesep}{0.5ex}
\setlength{\belowrulesep}{0.5ex}
\setlength{\cmidrulesep}{0.3ex}
\vspace{-14pt}
\caption{\footnotesize Inference cost comparison for conformal methods in terms of average NFE of $g$ (Fine-tuned LLM) per trajectory with 80\% target coverage on GSM8k variants.}
\small
\resizebox{0.5\textwidth}{!}{%
\begin{tabular}{lccc}
\toprule
 Conformal Method & Left Dense & Mid Dense & Right Dense \\
\midrule
Vanilla
&8.50 & 8.50  &8.50  \\
CRSVP
&8.50  &8.50  &8.50  \\
Left Filtering
&7.50  &5.64  &\textbf{3.70} \\
 Right Filtering
&\textbf{3.08}  &\textbf{5.16}  &7.09 \\
Two-Way
&8.50  &8.50  &8.50 \\
\bottomrule
\end{tabular}
}
\label{tab:time_analysis}
\end{wraptable}

As noted throughout \Cref{sec:method} and \Cref{tab:comparison}, at inference existing conformal methods including VCP and CRSVP require evaluating the scoring function $g$ on each step in the trajectory. LF and RF need only evaluate steps from one direction until their threshold is crossed, which can be much more efficient. To demonstrate this quantitatively, we count the average NFE of $g$ used by each algorithm when predicting on the GSM8k test set variants. The result in \Cref{tab:time_analysis} confirms that VCP, CRSVP, and TWF use exactly $\ell$ function calls per trajectory. In contrast, LF shows a clear advantage on right-dense error distributions, while RF uses a mere 36\% of the calls on left-dense data where most errors occur early. This result emphasizes the cost advantages of selecting an algorithm that has good implicit bias for the distribution observed in calibration data.
\subsection{Conformal Agent Rollbacks}
\begin{wraptable}[14]{r}{0.52\textwidth}
\centering
\setlength{\tabcolsep}{3pt}
\setlength{\extrarowheight}{-1pt}
\setlength{\aboverulesep}{0.5ex}
\setlength{\belowrulesep}{0.5ex}
\setlength{\cmidrulesep}{0.3ex}
\vspace{-10pt}
\caption{\footnotesize Automated rollback results using CP sets from VCP and LF compared to Top-1 predictions. Datasets are variants of GSM8k, and the scoring function uses the fine-tuned LLM.}
\footnotesize
\begin{tabular}{llccc}
\toprule
Dataset & Method & Success Rate $\uparrow$ & Coverage & Cost $\downarrow$ \\
\midrule
\multirow{2}{*}{Left Dense}
& Top-1 & 0.76\tiny$\pm$0.05\scriptsize & 0.92\tiny$\pm$0.03\scriptsize & 0.83\tiny$\pm$0.02\scriptsize \\
& VCP & 0.73\tiny$\pm$0.05\scriptsize & 0.85\tiny$\pm$0.03\scriptsize & 0.79\tiny$\pm$0.02\scriptsize \\
& LF    & 0.77\tiny$\pm$0.04\scriptsize & 0.81\tiny$\pm$0.05\scriptsize & 0.90\tiny$\pm$0.00\scriptsize \\
\midrule
\multirow{2}{*}{Mid Dense}
& Top-1 & 0.67\tiny$\pm$0.04\scriptsize & 0.86\tiny$\pm$0.04\scriptsize & 0.56\tiny$\pm$0.02\scriptsize \\
& VCP & 0.59\tiny$\pm$0.05\scriptsize & 0.95\tiny$\pm$0.02\scriptsize & 0.65\tiny$\pm$0.01\scriptsize \\
& LF    & 0.68\tiny$\pm$0.05\scriptsize & 0.81\tiny$\pm$0.03\scriptsize & 0.65\tiny$\pm$0.01\scriptsize \\
\midrule
\multirow{2}{*}{Right Dense}
& Top-1 & 0.71\tiny$\pm$0.05\scriptsize & 0.86\tiny$\pm$0.04\scriptsize & 0.50\tiny$\pm$0.02\scriptsize \\
& VCP & 0.70\tiny$\pm$0.05\scriptsize & 1.00\tiny$\pm$0.00\scriptsize & 0.66\tiny$\pm$0.02\scriptsize \\
& LF    & 0.75\tiny$\pm$0.04\scriptsize & 0.82\tiny$\pm$0.04\scriptsize & 0.40\tiny$\pm$0.01\scriptsize \\
\bottomrule
\end{tabular}
\label{tab:rollback}
\end{wraptable}

As noted in \cref{sec:rollback}, CP sets can enable automated rollbacks for failed tasks with high confidence that decisive errors are captured. We evaluate automated rollbacks on the GSM8K test set variants. For each dataset, we randomly sample 100 MACNET trajectories and score them with the fine-tuned LLM scoring function. As a baseline method, we use the single step in the trajectory with the highest likelihood (Top-1) as the restart point. For conformal rollbacks we predict a conformal set using either the VCP or LF algorithm with a target coverage of 80\%, and then restart at the first step in the set. In all cases we restart the MACNET agent with a modified prompt containing the previously failed trace to help the system avoid repeating the same mistakes. As shown in \cref{tab:rollback}, LF achieves the highest success rate by a small margin, within one standard deviation of baselines. However, LF is the only method which gives control over coverage: Top-1 gives no guarantee, while VCP overcovers in this setting because its sets are not contiguous.

\section{Conclusion}
\label{sec:conclusion}
\vspace{-4pt}
In this work we designed novel set-prediction algorithms for the agent error attribution task. By taking account of the data's sequential nature, our filtering-based conformal algorithms showed strong ability to precisely isolate decisive errors in agent trajectories within contiguous sets, and with much less compute needed at inference time. Our conformal algorithms can be used as one part of an agent improvement pipeline, allowing humans to more efficiently understand and debug errors in their agents, or enabling fully automated rollbacks wherein agents correct their own mistakes. The limitations of our methods include: the requirement of exchangeable data, which could be loosened with standard CP methods to handle distribution shifts \cite{gibbs2021adaptive}; the need to match algorithm choice to data distribution, which can easily be done by checking the calibration dataset; and the assumption that failed trajectories have a single decisive error, which can be naturally expanded to the setting of an arbitrary number of agent errors by using a CP guarantee on the recall level over \emph{all} errors \cite{kuwahara2025document}.


\bibliographystyle{plainnat}
\bibliography{bib}
\clearpage


\appendix

\section{Theorems and Proofs}
\label{app:a}

In this appendix we provide complete proofs of the theorems stated in the main text.

\subsection{Vanilla Conformal Prediction}\label{app:vcp-proof}

The theorem showing valid coverage for VCP with variable number of classes is essentially the same as the standard result for conformal prediction, for instance as presented by \citep{angelopoulos2022gentle}, since the number of classes per datapoint does not affect exchangeability of the conformal scores. Although not novel, we present the theorem below for completeness. Note that following tradition in CP, the scores in this algorithm reflect \emph{non-conformity} in that they take higher values when $y$ disagrees with $x$.

\begin{restatable}{theorem}{VCPTheorem}
\label{thm:vcp}
Suppose $\{(x_i, y_i^*)\}_{i=1}^n$ and $(x_{n+1}, y_{n+1}^*)$ are exchangeable, where the number of possible classes $\ell_i$ varies per datapoint. Define $\hat q$ as the $\tfrac{\lceil{(n+1)(1-\alpha)}\rceil}{n}$ quantile of the scores $\{S_i\}_{i=1}^n = \{S(x_i, y_i^*)\}_{i=1}^n$. Then prediction sets constructed as $C(x_{n+1}; \hat q)=\{y\in \mathcal{Y}(x_{n+1}) \mid S(x_{n+1}, y) \leq \hat q\}$ satisfy $\mathbb{P}[y_{n+1}^* \in C(x_{n+1};\hat q)] \geq 1 - \alpha$.
\end{restatable}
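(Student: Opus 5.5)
The plan is to run the standard split-conformal exchangeability argument, the same one used for \Cref{thm:LF}. The only new ingredient is to check that a varying number of classes $\ell_i$ does not break it. Since $S$ is a single fixed function applied to each pair $(x_i, y_i^*)$, it is defined on whatever label set $\mathcal{Y}(x_i)$ belongs to $x_i$. The first step is therefore to note that $S$, viewed as a map on pairs, is deterministic and the same for every index, with no dependence on the calibration data. It follows that $S_1,\dots,S_{n+1}$, with $S_{n+1}=S(x_{n+1},y_{n+1}^*)$, is an exchangeable sequence of reals. The length $\ell_i$ is simply part of the datapoint $x_i$, so it travels along under any permutation.

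Next comes the equivalence between coverage and the score event. By construction, $y_{n+1}^*\in C(x_{n+1};\hat q)$ holds iff $y_{n+1}^*\in\mathcal{Y}(x_{n+1})$ and $S_{n+1}\le\hat q$. The first condition holds automatically, because the true label is one of the $\ell_{n+1}$ steps. So the coverage event is exactly $\{S_{n+1}\le \hat q\}$. This is the analogue of \Cref{thm:LFEquivalence}, but here it is immediate from the definition.

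Then the quantile step. Set $k=\lceil (n+1)(1-\alpha)\rceil$, assume $n\ge 1/\alpha-1$ so that $k\le n$, and let $\hat q=S_{(k)}$ be the $k$-th order statistic of the calibration scores. Then $S_{n+1}\le S_{(k)}$ holds whenever the rank of $S_{n+1}$ among all $n+1$ scores is at most $k$. By exchangeability, the rank of $S_{n+1}$ is uniform on $\{1,\dots,n+1\}$ when there are no ties, and stochastically no larger than uniform when ties are present. Hence $\mathbb{P}[S_{n+1}\le\hat q]\ge k/(n+1)\ge 1-\alpha$.

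I expect the main obstacle to be ties. Since the theorem claims only the lower bound, I would avoid random tie-breaking altogether and argue directly: $S_{n+1}>S_{(k)}$ forces $S_{n+1}$ to be strictly greater than at least $k$ of the others. By symmetry, the expected number of indices whose score exceeds at least $k$ of the other $n$ scores is at most $n+1-k$. This gives $\mathbb{P}[S_{n+1}>\hat q]\le (n+1-k)/(n+1)$, and the lower bound follows without any non-degeneracy assumption. If $k>n$, I would set $\hat q=\infty$, in which case coverage is trivial.
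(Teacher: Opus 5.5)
Your proposal is correct and follows the paper's proof: the scores $S_1,\dots,S_{n+1}$ are exchangeable, coverage is identified with the event $\{S_{n+1}\le S_{(k)}\}$, and the rank argument gives probability at least $k/(n+1)\ge 1-\alpha$. The paper instead assumes non-degenerate scores (breaking ties with noise) and $n\ge 1/\alpha-1$, whereas you handle ties directly via the deterministic bound that at most $n+1-k$ scores can each strictly exceed $k$ others, and you also cover $k>n$ with $\hat q=\infty$; this makes your version slightly more general, at the cost of the exact equality $k/(n+1)$ the paper records in the tie-free case.
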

\begin{proof}
Since the fixed scoring function $S$ is applied to each element of the exchangeable sequence $(x_1, y^*_1), \dots, (x_{n+1}, y^*_{n+1})$, the resulting sequence of scalar scores $S_1, \dots, S_{n+1}$ is also exchangeable. We assume for simplicity that the scores are non-degenerate, but ties can be handled by defining $S$ to add a small amount of noise to its output.  
Let $S_{(1)} \le S_{(2)} \le \dots \le S_{(n)}$ be the order statistics of the calibration scores so that the empirical quantile $\hat{q}$ can be defined as $S_{(k)}$, where $k = \lceil (n+1)(1-\alpha) \rceil$ (assuming $n\geq \tfrac{1}{\alpha} - 1$ to ensure $k \leq n$). 
Given the definition of $C(x_{n+1}; \hat q)$, the event $y_{n+1}^* \in C(x_{n+1}; \hat q)$ is equivalent to the event $S_{n+1} \leq \hat{q}$, which occurs if and only if $S_{n+1}$ is among the $k$ smallest scores overall. From exchangeability, all orderings are equally likely, so $\mathbb{P}[S_{n+1} \leq \hat{q}] = \frac{k}{n+1}$. Altogether we have 
\begin{equation}
    \mathbb{P}[y_{n+1}^* \in C(x_{n+1}; \hat{q})] = \frac{k}{n+1} = \frac{\lceil (n+1)(1-\alpha) \rceil}{n+1} \ge \frac{(n+1)(1-\alpha)}{n+1} = 1 - \alpha.
\end{equation}
\end{proof}
We note that the variable number of classes $\ell_i$ per datapoint does not come up in the proof, and is only present in the assumption that such datapoints can still be exchangeable. This is certainly possible when we treat $\ell_i$ as a (random) property of $x$. If desired, one could treat $\ell_i$ as an explicit random variable and draw points as $(x_i, \ell_i, y^*_i)\sim \mathbb{P}$. Another way of viewing this extension would be to set a fixed $\ell$ as some upper bound of possible sizes, such as $\ell=\max \{\ell_i\}$. Then, ordinary conformal prediction can be performed with a model $f$ defined over $\ell$ classes but giving $f_{j}=0$ when $j > \ell_i$, as long as we assume $\ell_{n+1}\leq \ell$.

\subsection{Left (Right) Filtration}\label{app:lf-proof}
Here we provide full formal proofs for each of the steps in \Cref{sec:LF}. For ease of reference, we repeat key definitions and assumptions. A trajectory $x$ is a sequence of a variable number $\ell$ steps $(c_1, \dots, c_\ell)$, and the decisive error $y^*=c_{j^*}$ is the step at index $j^*$. The scoring function $g_\text{LF}$ scores subintervals $c_{j:k}\subseteq x$ with the properties $g_\text{LF}(\varnothing) = 0$ and $g_\text{LF}(x)=\infty$. The filtering function is defined as 
$F_{\text{LF}}(x; q) := \argmax_{c_{j: \ell} \in \mathcal{I}_{\text{LF}}} \big(\vert c_{j: \ell}\vert \mid g_{\text{LF}}(c_{j: \ell}) \leq q\big)$ which optimizes over suffixes $\mathcal{I}_{\text{LF}}:= \{c_{j:  \ell}\}_{j=1}^{\ell+1}$ and satisfies $F_{\text{LF}}(x; \infty) = x$. Meanwhile, the conformal score function is defined as $S_{\text{LF}}(x, y^*) := \inf\big(q\in \mathbb{R^+} \mid y^* \in F_{\text{LF}}(x; q)\big)$. Both of these functions optimize over a set of steps, and it will be convenient to refer to those sets as follows:
\begin{align}
    \mathcal{V}_q &:= \{c_{j:\ell} \in \mathcal{I}_{\text{LF}} \mid g_{\text{LF}}(c_{ j:\ell}) \leq q\},\\
    \mathcal{Q} &:= \{q\in \mathbb{R^+} \mid y^* \in F_{\text{LF}}(x; q) \}.
\end{align}

\LFNesting*
\begin{proof}
$\mathcal{V}_q$ is the set of valid suffixes for $F_{\text{LF}}(x; q)$. For every $c_{ j:\ell} \in \mathcal{V}_{q_1}$ we have $g_{\text{LF}}(c_{j:\ell}) \leq q_1 \leq q_2$, and so $c_{j:\ell} \in \mathcal{V}_{q_2}$. Hence, $\mathcal{V}_{q_1} \subseteq \mathcal{V}_{q_2}$. $F_{\text{LF}}(x; q)$ returns the longest suffix in $\mathcal{V}_{q}$, so we also have $\vert F_{\text{LF}}(x; q_1) \vert \leq \vert F_{\text{LF}}(x; q_2) \vert$. Since the suffixes $c_{j:\ell} \in \mathcal{I}_{\text{LF}}$ are nested (i.e. $j_2 < j_1 \iff c_{j_1:\ell} \subset c_{j_2:\ell}$), we also have $\vert c_{j_1:\ell} \vert \leq \vert c_{j_2:\ell}\vert \implies c_{j_1:\ell} \subseteq c_{j_2:\ell}$, which gives the desired nesting property of $F_{\text{LF}}(x; q)$.
\end{proof}

\LFComputation*
\begin{proof}
$\mathcal{Q}$ is the set of valid thresholds for $S_{\text{LF}}(x, y^*)$. Let $q^* = g_\text{LF}(c_{j^*:\ell})$, and note that $c_{j^*:\ell} \in \mathcal{V}_{q^*}$. Since $\vert c_{j^*:\ell}\vert < \vert c_{j:\ell}\vert$ only when $j<j^*$, and $y^*\in c_{j:\ell}$ for these cases, the longest suffix in $\mathcal{V}_{q^*}$ contains $y^*$. Hence, we have that $y^* \in F_{\text{LF}}(x; q^*)$, and $q^* \in \mathcal{Q}$.

Now consider any $q'<q^*$. Notice that $c_{j^*:\ell} \not\in \mathcal{V}_{q'}$. By monotonicity of $g_\text{LF}$, for all $j< j^*$, $g_\text{LF}(c_{j:\ell}) \geq q^* > q'$. Therefore, the longest suffix in $\mathcal{V}_{q'}$ is shorter than $c_{j^*:\ell}$, and does not contain $y^*$. Hence, $q' \not \in \mathcal{Q}$.

Since $q^* \in \mathcal{Q}$, but all $q' < q^*$ are not, $q^*$ is the minimum of $\mathcal{Q}$, and $S_{\text{LF}}(x, y^*) = q^* = g_\text{LF}(c_{j^*:\ell})$.
\end{proof}

\LFEquivalence*
\begin{proof}
To begin, we show that the infimum in $S_{\text{LF}}(x, y^*)$ is always achieved. Note that the set of suffix scores $g_{\text{LF}}(c_{j:\ell})$ is finite. Considering only the valid suffixes that contain $y^*$, let $q_{\min} = \min \{g_{\text{LF}}(c_{j:\ell}) \mid j \le j^*\}$. Due to the nesting of suffixes, the condition $y^* \in F_{\text{LF}}(x; q)$ is satisfied if and only if there exists a valid suffix with score $\leq q$, which occurs precisely when $ q_{\min} \leq q$. Consequently, $\mathcal{Q}$ is the closed interval $[q_{\min}, \infty)$, and the infimum in $S_{\text{LF}}(x, y^*)$ is achieved at $q_\text{min}$.

Hence we have a chain of equivalences:
\begin{equation}
    S_{\text{LF}}(x, y^*) \leq \hat{q} \iff q_\text{min} \leq \hat q \iff \hat q\in \mathcal{Q} \iff y^* \in F_{\text{LF}}(x; \hat q).
\end{equation}

\end{proof}

\subsection{Two-Way Filtration}\label{app:twf-proof}

Next we cover the proofs of statements in \Cref{sec:twf}. Continuing from the definitions and results above, including their counterparts for RF, for TWF we had $F_{\text{TWF}}(x; q) := F_{\text{LF}}(x; q) \cap F_{\text{RF}}(x; q)$ (where  $F_{\text{TWF}}(x; 0) = \varnothing$ and $F_{\text{TWF}}(x; \infty) = x$), and $S_{\text{TWF}}(x, y^*) := \inf\big(q\in \mathbb{R^+} \mid y^*\in F_{\text{TWF}}(x; q)\big)$.

\TWFNesting*
\begin{proof}
From \Cref{thm:LFNesting} we have $F_{\text{LF}}(x; q_1) \subseteq F_{\text{LF}}(x; q_2)$, and the equivalent statement for $F_{\text{RF}}$ holds. The intersection operation preserves set inclusion: for any sets $A, B, C ,D$, if $A \subseteq B$ and $C \subseteq D$, then $A \cap C \subseteq B \cap D$. Hence, 
\begin{equation}
    F_{\text{TWF}}(x; q_1) = F_{\text{LF}}(x; q_1) \cap F_{\text{RF}}(x; q_1) \subseteq F_{\text{LF}}(x; q_2) \cap F_{\text{RF}}(x; q_2) = F_{\text{TWF}}(x; q_2).
\end{equation}
\end{proof}

\TWFComputation*
\begin{proof}
By the definition of $F_{\text{TWF}}$, $y^*\in F_{\text{TWF}}(x; q)$ if and only if $y^* \in F_{\text{LF}}(x; q)$ and $y^* \in F_{\text{RF}}(x; q)$. The condition $y^* \in F_{\text{LF}}(x; q)$ holds if and only if $q \geq S_{\text{LF}}(x, y^*)$ by the definition of $F_{\text{LF}}(x; q)$ and its nesting property from \Cref{thm:LFNesting}. Similarly, $y^* \in F_{\text{RF}}(x; q)$ if and only if $q \geq S_{\text{RF}}(x, y^*)$. Hence, $y^*\in F_{\text{TWF}}(x; q)$ if and only if $q \geq \max(S_{\text{LF}}(x;  y^*), S_{\text{RF}}(x;  y^*))$. By taking the infimum for $S_{\text{TWF}}(x;  y^*)$, we reach \Cref{eq:TWFIntermediate}.

From \Cref{thm:LFComputation}, when $g_\text{LF}$ obeys its monotonicity condition, we have $S_{\text{LF}}(x, y^*)=g_\text{LF}(c_{j^*:\ell})$, and by a similar argument $S_{\text{RF}}(x;  y^*) = g_{\text{RF}}(c_{1:j^*})$. Substituting these results into \Cref{eq:TWFIntermediate} gives \Cref{eq:TWFComputation}.
\end{proof}

\TWFEquivalence*
\begin{proof}
From the definition of $F_{\text{TWF}}(x, q)$ as an intersection, the event $y^*\in F_{\text{TWF}}(x, \hat q)$ occurs if and only if both $y^*\in F_{\text{LF}}(x, \hat q)$ and $y^*\in F_{\text{RF}}(x, \hat q)$ occur. We have already shown that $y^*\in F_{\text{LF}}(x, \hat q)\iff S_{\text{LF}}(x, y^*)\leq\hat q$ (\Cref{thm:LFEquivalence}), and similarly for RF. Hence, 
\begin{equation}
    y^*\in F_{\text{TWF}}(x, \hat q) \iff S_{\text{LF}}(x, y^*)\leq\hat q \wedge S_{\text{RF}}(x, y^*)\leq\hat q.
\end{equation}
Equivalently we write $\max\big(S_{\text{LF}}(x, y^*),S_{\text{RF}}(x, y^*)\big) \leq \hat q$, which is the same as $S_{\text{TWF}}(x, y^*)\leq \hat{q}$ (\Cref{thm:TWFComputation}).
\end{proof}

\begin{restatable}{theorem}{TWFTheorem}
\label{thm:TWF}
Suppose $\{(x_i, y_i^*)\}_{i=1}^n$ and $(x_{n+1}, y_{n+1}^*)$ are exchangeable. Given scoring functions $g_{\text{LF}}$ and $g_{\text{RF}}$ as defined above, define the conformal score $S_{\text{TWF}}(x_i, y_i^*)$ as in \Cref{eq:two-way-filter-score}. Let $\hat q$ be the $\tfrac{\lceil{(n+1)(1-\alpha)}\rceil}{n}$ quantile of conformal scores $\{S_i\}_{i=1}^n = \{S_{\text{TWF}}(x_i, y^*_i)\}_{i=1}^n$. Then prediction sets constructed as $C_{\text{TWF}}(x_{n+1}; \hat q) = F_{\text{TWF}}(x_{n+1}; \hat q)$ satisfy $1 - \alpha \leq\mathbb{P}[y_{n+1}^* \in C_{\text{TWF}}(x_{n+1}; \hat q)] < 1 - \alpha + \frac{1}{n+1}$.
\end{restatable}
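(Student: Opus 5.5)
The plan is to mirror the proof of \Cref{thm:LF} verbatim, with \Cref{thm:TWFEquivalence} taking the place of \Cref{thm:LFEquivalence}. First, $S_{\text{TWF}}$ is a fixed deterministic function of a single datapoint $(x,y^*)$: it depends only on $g_{\text{LF}}$, $g_{\text{RF}}$ and that pair, and by \Cref{thm:TWFComputation} it equals $\max(S_{\text{LF}},S_{\text{RF}})$. Applying it coordinatewise to the exchangeable sequence $(x_1,y_1^*),\dots,(x_{n+1},y_{n+1}^*)$ therefore yields an exchangeable sequence of scalar scores $S_1,\dots,S_{n+1}$. As in the earlier proofs, I assume the scores are almost surely distinct, which can be enforced by adding tiny independent noise.

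Next comes the rank step. Let $k=\lceil (n+1)(1-\alpha)\rceil$, assume $n\ge 1/\alpha-1$ so that $k\le n$, and write $\hat q=S_{(k)}$. The event $S_{n+1}\le \hat q$ holds exactly when $S_{n+1}$ ranks among the $k$ smallest of the $n+1$ scores. By exchangeability and the absence of ties, the rank of $S_{n+1}$ is uniform on $\{1,\dots,n+1\}$, so $\mathbb{P}[S_{n+1}\le\hat q]=k/(n+1)$.

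To pass from scores to sets, I apply \Cref{thm:TWFEquivalence} with the random $\hat q$. The lemma is a pointwise statement valid for every fixed threshold, so it holds realization by realization. It gives $S_{n+1}\le \hat q \iff y_{n+1}^*\in F_{\text{TWF}}(x_{n+1};\hat q)=C_{\text{TWF}}(x_{n+1};\hat q)$. Hence the coverage probability is exactly $k/(n+1)$, which is at least $1-\alpha$. The upper bound follows from $\lceil z\rceil<z+1$, giving $k/(n+1)<1-\alpha+1/(n+1)$.

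The main point needing care is this step from scores to sets. The equivalence must be exact, as \Cref{thm:TWFEquivalence} provides; a mere implication would give only the lower bound. It also relies on the infimum in $S_{\text{TWF}}$ being attained. This is inherited from \Cref{thm:LFEquivalence} and its RF analogue, since each of $S_{\text{LF}}$ and $S_{\text{RF}}$ is a minimum over finitely many suffix or prefix scores. If tie-breaking noise is used, I would note that it is added inside the score in a way that preserves the equivalence, for instance by perturbing the $g$ values themselves.
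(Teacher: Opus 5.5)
Your proposal is correct and is the paper's argument: the paper proves \Cref{thm:TWF} by rerunning the rank argument of \Cref{thm:LF} with \Cref{thm:TWFEquivalence} in place of \Cref{thm:LFEquivalence}, under the same no-ties assumption you make. One caution on your tie-breaking aside, which applies equally to the paper: because $g_{\text{LF}}(x)=g_{\text{RF}}(x)=\infty$, every datapoint with $j^*\in\{1,\ell\}$ has $S_{\text{TWF}}=\infty$, and perturbing the finite $g$ values cannot separate these ties, so the upper bound depends on the non-degeneracy assumption itself, not on added noise.
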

\begin{proof}
The proof proceeds in the same way as \Cref{thm:LF}, using \Cref{thm:TWFEquivalence} instead of \Cref{thm:LFEquivalence}.
\end{proof}

As a final point, we note that prediction sets constructed as $C_{\text{TWF}}(x_{n+1}; \hat q) = F_{\text{TWF}}(x_{n+1}; \hat q)$ may be empty due to the use of intersection. While this is not strictly a concern as coverage is valid marginally, it can be undesirable to predict empty sets. As a fallback, TWF can predict the single most likely step under $g_\text{LF}$ or $g_\text{RF}$ (which are likely to be the same function in practice).

\newpage
\section{Additional Details of Conformal Algorithms}
\label{app:b}

In \Cref{sec:crsvp} we described how to apply the CRSVP algorithm by \citet{mortier2025conformal}, originally designed for hierarchical classification, to sequential data like agent trajectories. Here we provide a more detailed description of our implementation.

To recap, a binary tree $\mathcal{T}$ is constructed from an agent trajectory $x=(c_1, \dots, c_{\ell})$ to have root node $v_1 = [\ell]$, and leaf nodes $v_\ell, \dots, v_{2\ell-1}$ as individual steps $c_1, \dots, c_\ell$. For each calibration datapoint $(x, y^*)$, a scoring function $g_\text{CRSVP}$ is used to find the single leaf $\bar v$ most likely to contain the error $y^*$. If $\bar v$ is itself the decisive error $y^*$, then the conformal score is returned as $S_\text{CRSVP}(x, y^*) := g_\text{CRSVP}(\bar v)$. Otherwise, the tree is traversed upwards until the first node $v^*$ is reached which does contain $y^*$. Since the number of steps in each node grows exponentially as the tree is traversed, simply using the score $g_\text{CRSVP}(v^*)$ would greatly overcover the true labels. Hence, CRSVP interpolates between steps a random amount, similar to other conformal scoring functions \cite{angelopoulos2021raps}. Let $v^{*-1}$ be the node before $v^*$ on the traversal path. Then the conformal score is set as 
\begin{equation}
S_\text{CRSVP}(x, y^*) := g_\text{CRSVP}(v^*)- u\cdot(g_{\text{CRSVP}}(v^*) - g_{\text{CRSVP}}(v^{*-1})),
\end{equation}
where $u\sim \mathcal{U}(0,1)$ is noise drawn uniformly from $[0,1]$. Note that parent nodes are strict supersets of their children, which means prediction sets are nested when traversing from leaf to root \cite{GUPTA2022108496}.

After conformal calibration to get the $\frac{\lfloor(n+1)(\alpha)\rfloor}{n}$ quantile of the scores\footnote{Note that the quantile is different than VCP because we use conformity scores in CRSVP, and $1-g$ as a nonconformity score for VCP. These are merely conventions and can easily be modified.}, which we denote $\hat q$, for a test datapoint $x_{n+1}$, CRSVP again finds the most likely leaf node $\bar v$ according to $g_\text{CRSVP}$. If that node's score is below the threshold, $g_\text{CRSVP}(\bar v) < \hat q$, CRSVP recursively traverses up the tree until the threshold is surpassed at $\hat v$. Drawing noise $u$, if $g_{\text{CRSVP}}(\hat v) - u\cdot(g_{\text{CRSVP}}(\hat v) - g_{\text{CRSVP}}(\hat v^{-1}))\geq \hat q$, then $\hat v^{-1}$ is returned, otherwise $\hat v$ itself is. By construction all prediction sets generated this way are contiguous subintervals of $x_{n+1}$ and the lower bound on coverage is guaranteed (\Cref{eq:coverage-guarantee-contiguous}) \citep{mortier2025conformal}.

\newpage
\section{Implementation Details and Additional Results}
\label{app:c}

\subsection{Additional Experiment Details}
In this section we provide more detail about datasets, models, and experimental results. 

\subsubsection{Synthetic Dataset Generation}\label{app:synthetic-data}
As discussed in \cref{sec:datasets}  we construct three variants of the original GSM8k dataset that explicitly control the location of the decisive error step along the execution trajectory. Errors are first generated uniformly by selecting one step in a trajectory, and injecting instructions to the agent to fail on that step in various ways \cite{kong2025aegis, pan2025why}. We generate 1200 failed trajectories for GSM8k for each of the DyLAN and MACNET agent frameworks (and similarly for the MATH dataset). Then the trajectories are split by error position, in the first third, middle third, or final third to form the Right-, Mid-, and Left-Dense GSM8k datasets.

\subsubsection{Context-Engineered LLM Implementation}
Prior work has shown that engineering the context of LLMs and prompting them with diverse role-specific instructions can improve evaluation robustness~\citep{wu2023largelanguagemodelsdiverse}. Following this general idea, we employ multiple role prompts, inspired by ECHO~\cite{banerjee2025hierarchical}, to obtain step-level scores from different perspectives. We first use the LLM to summarize why the agent failed the task using the overall trace, then pass this information to LLMs with four different roles to generate scores. The average of these scores is the final step score. The exact prompts used for the context-engineered LLM can be found in \Cref{app: llm prompts}.

\subsubsection{Fine-tuned Scoring Function}
As described in \cref{sec:scoring function}, we fine-tuned an LLM to estimate step-level likelihoods of being the decisive error step.
\paragraph{Training data}
Due to limited data in Who\&When \citep{zhang2025agent}, we only perform fine-tuning on the MATH and GSM8k datasets with synthetically injected errors. As mentioned in \Cref{app:synthetic-data}, we generated 4{,}800 trajectories in total across datasets and MAS architectures. Of these, 4{,}000 were used for model fine-tuning, equally split across the datasets, architectures, and error locations. We then used an 85\%/15\% split for training and validation to support hyperparameter selection. The 800 trajectories not used for training were held out entirely from fine-tuning and instead were used for conformal calibration and testing with an even split. Throughout this work, we report model performance on this unseen testing split.

\paragraph{Model Training}
We fine-tuned a Qwen3-1.7B~\cite{yang2025qwen3technicalreport} LLM for 20 epochs to estimate step-wise error likelihoods in failed multi-agent trajectories. Since each trajectory had one step labeled as the decisive error (determined through error injection), we fine-tuned the model $f_\theta$ on the $\ell$-way classification task to produce a scalar score $f_\theta(c_{i, j})$ for each step $j$ in datapoint $i$, interpreted as the likelihood that step $c_{i,j}$ contains the decisive error. We write $e_{i, j}$ as the binary indicator of whether step $j$ in datapoint $i$ is the decisive error. The model parameters $\theta$ were optimized via binary cross-entropy:
\begin{equation}
\mathcal{L}(\theta)
=
-\sum_i \sum_{j=1}^{\ell_i}
\Big[
e_{i,j} \log \sigma\!\left(f_\theta(c_{i,j})\right)
+
(1-e_{i,j}) \log \!\left(1-\sigma\!\left(f_\theta(c_{i,j})\right)\right)
\Big],
\end{equation}
where $\sigma(\cdot)$ denotes the logistic sigmoid.

\begin{table}[t]
\caption{Fine-tuning configuration
}
\centering
\small
\begin{tabular}{lcccccc}
\toprule
\textbf{Model} & \textbf{Dataset}  & \textbf{Epochs} & \textbf{Batch Size} & \textbf{Precision} & \textbf{LoRA $(r,\alpha)$} & \textbf{LR} \\
\midrule
Qwen3-1.7B & GSM8k + MATH & 18 & 32 & 8-bit & (2, 8) & $2{\times}10^{-6}$ \\  
\bottomrule
\end{tabular}

\label{tab:finetune-settings}
\end{table}

We provide more training set up details in~\Cref{tab:finetune-settings}, and report the performance before and fine-tuning  the models in \Cref{tab:qwen3_finetune}. 

\paragraph{Compute Resources}
For fine-tuning the 1.7B parameter LLM we used an Nvidia GeForce RTX 5090. 20 epochs of training took roughly one day, and consumed less than 16 GB of GPU memory with batch size 32. All other calls to LLMs used commercial APIs, and we discussed the number of calls needed in \Cref{sec:efficiency}. Once scoring calls are made, performing conformal calibration is a trivial computational cost.

\begin{table}[t]
\centering
\caption{Performance of Qwen3-1.7B models before and after fine-tuning}
\begin{tabular}{lccc}
\toprule
Model & 
AUROC & AUPRC & Accuracy \\
\midrule
Pretrained
& 0.505 & 0.369 & 0.509 \\
Fine-tuned
& 0.762 & 0.382 & 0.731 \\
\bottomrule
\end{tabular}
\label{tab:qwen3_finetune}
\end{table}

\subsubsection{Rollback Experiment Implementation}
\begin{figure}[h!]
    \centering
    \includegraphics[width=0.6\linewidth, trim={45 0 0 0}, clip]{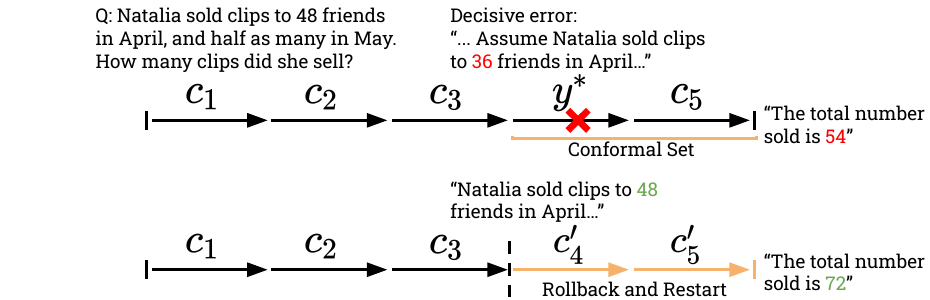}
    \caption{Example rollback scenario with text for the task, decisive error, final answer, and corrected versions after the rollback. We first generate a conformal set for the failed task, roll back the state of the MAS to the first step in the prediction set, and restart the agent with instructions to avoid the same mistakes.}
    \label{fig:rollback_texted}
    \vspace{-6pt}
\end{figure}
Figure~\ref{fig:rollback_texted} provides a concrete example of the rollback procedure. After identifying a conformal set for a failed trajectory using the LF algorithm, the system rolls back to the earliest step in the set and re-executes the task from that point on with additional context from the failed trace, allowing the MAS to correct the final outcome. Prompt details used during the restart are provided in \Cref{app: llm prompts}.

\subsection{Additional Experimental Results}
\label{app: additional_res}

\subsubsection{Scoring Aggregation Methods}

In \Cref{sec:scoring function} we discussed ways to tailor LLMs to the error attribution task and generate a step-wise error likelihood score. Step-wise scores then need to be aggregated into a set-wise score, ideally one that obeys monotonicity (\Cref{eq:g_LF_nesting}). In that section we demonstrated the normalized sum function for aggregation, $g(c_{j:k}) = \frac{1}{\ell} \sum_{i=j}^{k} \text{LLM}(c_i),$ which was used for experiments in the main text. Here we consider alternatives and show empirical results across them.

\paragraph{Max Aggregation} Another way to monotonically aggregate step-level scores is by taking the maximum score over a set:
\begin{equation}\label{eq:aggregation-max}
    g_{\text{max}}(c_{j:k}) = \max_{i\in \{j, ..., k\}} \text{LLM}(c_i) + \lambda \frac{k-j}{\ell}.
\end{equation}
Note that as opposed to summation where length normalization is included on the sum to account for trajectories of different lengths $\ell$, we do not normalize the max, since the single-step max tends to be similar across trajectories of different lengths. Instead we have included an optional length penalty term, weighted by the hyperparameter $\lambda$, to penalize sets that are longer than necessary to contain the highest scoring single step. Below, we also test adding a similar length penalty on the sum aggregation. Max aggregation should be useful when the LLM produces very peaked step-wise scores, as opposed to sum aggregation which can benefit when step-wise scores are more uniform.

\paragraph{LogSumExp Aggregation} As a third alternative, we consider a family of monotonic aggregations that interpolate between the extremes of sum and max by using the LogSumExp function:
\begin{equation}\label{eq:aggregation-lse}
    g_{\text{LSE}}(c_{j:k}) = \frac{1}{\beta}  \text{LogSumExp}(\beta \ell \cdot [\text{LLM}(c_j), ..., \text{LLM}(c_k)]) - \log \ell.
\end{equation}
This aggregation includes the inverse temperature hyperparameter $\beta$, where higher values behave more like the max function, while $\beta$ close to zero behaves like the mean function (but is monotonic, unlike the true mean function). We also add length normalization via $\ell$ to make scores more similar across trajectories with different lengths.

\paragraph{Experimental Results}

In \Cref{tab:rf_aggregators}, we compare Max, LogsumExp, and Sum aggregation with the optional length penalty using the RF conformal algorithms on the Left-Dense subset of GSM8k, across three evaluators(Naive LLM, Role-prompted, and the finetuned Qwen3-1.7B) settings and two agent architectures (DyLAN and MACNET). The primary result is that RF is largely insensitive to the details of the scoring function $g$, from the LLM design, to the aggregation function, or various hyperparameters. This means a simpler scoring function design is sufficient for use with our filtering algorithms.

\begin{table}[t]
\centering
\small
\setlength{\tabcolsep}{5pt}
\caption{Removal Rate with the Right Filtering conformal algorithm across various aggregation functions on GSM8K Left Dense dataset.}
\label{tab:rf_aggregators}
\resizebox{\textwidth}{!}{%
\begin{tabular}{llcccccc}
\toprule
\multirow{2}{*}{Aggregation} & \multirow{2}{*}{Hypers} & \multicolumn{3}{c}{DyLAN} & \multicolumn{3}{c}{MACNET} \\
\cmidrule(lr){3-5} \cmidrule(lr){6-8}
 &  & Naive LLM & Role-prompted & Qwen3-1.7B & Naive LLM & Role-prompted & Qwen3-1.7B  \\
 \midrule
\multirow{4}{*}{Sum + Length Penalty} & $\lambda=0.01$ & 0.64$\pm$0.02 & 0.63$\pm$0.02 & 0.65$\pm$0.01 & 0.59$\pm$0.01 & 0.59$\pm$0.01 & 0.62$\pm$0.02 \\
 & $\lambda=0.02$ & 0.64$\pm$0.02 & 0.63$\pm$0.02 & 0.65$\pm$0.01 & 0.59$\pm$0.01 & 0.59$\pm$0.01 & 0.62$\pm$0.02 \\
 & $\lambda=0.05$ & 0.64$\pm$0.02 & 0.63$\pm$0.02 & 0.65$\pm$0.01 & 0.59$\pm$0.01 & 0.59$\pm$0.01 & 0.62$\pm$0.02 \\
 & $\lambda=0.1$ & 0.64$\pm$0.02 & 0.63$\pm$0.02 & 0.65$\pm$0.01 & 0.59$\pm$0.01 & 0.59$\pm$0.01 & 0.62$\pm$0.01 \\
\midrule
\multirow{4}{*}{Max + Length Penalty} & $\lambda=0.01$ & 0.64$\pm$0.02 & 0.63$\pm$0.02 & 0.65$\pm$0.01 & 0.59$\pm$0.01 & 0.59$\pm$0.02 & 0.62$\pm$0.02 \\
 & $\lambda=0.02$ & 0.64$\pm$0.02 & 0.63$\pm$0.02 & 0.65$\pm$0.01 & 0.59$\pm$0.01 & 0.59$\pm$0.01 & 0.62$\pm$0.02 \\
 & $\lambda=0.05$ & 0.64$\pm$0.02 & 0.63$\pm$0.02 & 0.65$\pm$0.01 & 0.59$\pm$0.01 & 0.59$\pm$0.02 & 0.62$\pm$0.01 \\
 & $\lambda=0.1$ & 0.64$\pm$0.02 & 0.63$\pm$0.02 & 0.65$\pm$0.01 & 0.59$\pm$0.01 & 0.59$\pm$0.02 & 0.62$\pm$0.01 \\
\midrule
\multirow{3}{*}{Normalized LogSumExp} & $\beta=0.1$ & 0.64$\pm$0.02 & 0.63$\pm$0.02 & 0.65$\pm$0.01 & 0.59$\pm$0.01 & 0.59$\pm$0.02 & 0.63$\pm$0.02 \\
 & $\beta=1$ & 0.64$\pm$0.02 & 0.63$\pm$0.02 & 0.65$\pm$0.01 & 0.59$\pm$0.01 & 0.59$\pm$0.02 & 0.63$\pm$0.02 \\
 & $\beta=10$ & 0.63$\pm$0.02 & 0.63$\pm$0.02 & 0.64$\pm$0.02 & 0.60$\pm$0.01 & 0.59$\pm$0.01 & 0.62$\pm$0.01 \\
\bottomrule
\end{tabular}
}
\end{table}

\subsection{Prompts Used}
\label{app: llm prompts}
\paragraph{Naive LLM Prompts}

We use following prompt for the naive LLM scoring function.
\begin{tcolorbox}[
  colback=gray!5!white, 
  colframe=gray!75!black, 
  boxrule=0.5pt,
  left=6pt,
  right=6pt,
  top=6pt,
  bottom=6pt
]
\small\ttfamily
You are an AI assistant tasked with analyzing a segment of a multi-agent conversation.
Multiple agents are collaborating to address a user query, with the goal of resolving
the query through their collective dialogue.

Your primary task is to identify the location of the most critical mistake within the
provided segment. 

The problem to address is as follows: \{problem\}

The final correct answer for the problem is: \{answer\}

The wrong answer given at the end of the conversation is: \{wrong\_answer\}

Review the following conversation segment: \{evaluate\_content\}

Cut from the entire history:

\{chat\_segment\_content\}

Based on your analysis, please respond with ONLY a single probability value between
0 and 1, representing the estimated probability that the critical error occurs within
this conversation segment. Do not include any explanation or additional text.
\end{tcolorbox}

\paragraph{Context Engineered Prompts}
As mentioned earlier, along with the context engineered setting we use four role-based LLMs as part of the scoring function. Here, we provide the prompts used for each LLM.

\begin{tcolorbox}[
  colback=gray!5!white, 
  colframe=gray!75!black, 
  boxrule=0.5pt,
  left=6pt,
  right=6pt,
  top=6pt,
  bottom=6pt
]
\ttfamily\small

\textbf{Conservative:} \\
- Attribute an error only when explicit contradiction or logical error is visible. \\
- Assign high confidence only if the evidence is direct and quotes are exact.

\medskip 

\textbf{Liberal:} \\
- Consider potential upstream or downstream causes even if indirect. \\
- Assign moderate confidence (0.5--0.7) for plausible but not proven causes.

\medskip

\textbf{Skeptical:} \\
- Always include at least one alternative explanation and note missing evidence. \\
- Lower confidence if evidence is incomplete.

\medskip

\textbf{Pattern:} \\
- Focus on repeated reasoning or coordination issues. \\
- Identify patterns across multiple steps rather than isolated mistakes.

\end{tcolorbox}

We force the output into the following schema:

\begin{tcolorbox}[
  colback=gray!5!white,
  colframe=gray!75!black,
  fontupper=\small\ttfamily,
  title=Output Schema,
  fonttitle=\bfseries
]
\begin{verbatim}
{
  "investigation_summary": "short summary",
  "primary_conclusion": {
    "agent": "Agent-X",
    "mistake_step": "int",
    "confidence": "float (0--1)",
    "evidence": ["quoted spans or step references"],
    "reason": "brief reason",
    "alternative_explanations": ["..."]
  }
}
\end{verbatim}
\end{tcolorbox}

The final prompt for each role is as follows: 

\begin{tcolorbox}[
  colback=gray!5!white, 
  colframe=gray!75!black, 
  boxrule=0.5pt,
  left=6pt,
  right=6pt,
  top=6pt,
  bottom=6pt
]
\small\ttfamily
Problem to solve: \{problem\}

Expected correct answer: \{answer\}

Conversation segment under review:
\{evaluate\_content\}

Full chat context:
\{chat\_segment\_content\}

ROLE: \{role\} Analyst

\{role\_instructions\}

TASK: Identify whether this segment likely contains the key reasoning error that
led to the wrong final answer. Provide your analysis strictly in JSON format matching
the schema below. Quote evidence spans exactly from the text and assign a numeric
confidence between 0 and 1.

Confidence mapping:
0.0--0.2 = Very unlikely,
0.2--0.4 = Unlikely,
0.4--0.6 = Possible,
0.6--0.8 = Likely,
0.8--1.0 = Very likely.

Return ONLY valid JSON (no extra text):
\{schema\}
\end{tcolorbox}

\paragraph{Conformal Rollback Prompts}
The conformal rollback instructions provided to the MAS are as follow: 

\begin{tcolorbox}[
  colback=gray!5!white,
  colframe=gray!75!black,
  boxrule=0.5pt,
  left=6pt,
  right=6pt,
  top=6pt,
  bottom=6pt
]
\small
\ttfamily
NOTE: The following conversation history (Node 0 to Node \{cut\_index\}) contains wrong information.
Please avoid making the same mistakes if you detect the mistake:

\{wrong\_conversation\_content\}

With the following mostly correct conversation history:
\{prev\_context\}

Solve this problem step by step:
\{task\_query\}
\end{tcolorbox}



\end{document}